\tikzset{every picture/.style={remember picture}}
\newcommand\blfootnote[1]{%
  \begingroup
  \renewcommand\thefootnote{}\footnote{#1}%
  \addtocounter{footnote}{-1}%
  \endgroup
}
\newtheorem{theorem}{Theorem}[section]
\newtheorem{corollary}[theorem]{Corollary}
\newtheorem{definition}[theorem]{Definition}
\theoremstyle{definition}
\newtheorem{remark}[theorem]{Remark}
\title{A topological description of loss surfaces based on Betti Numbers}
\author{
  Maria Sofia Bucarelli$^{\dag}$\\
  DIAG \\
  Sapienza University of Rome \\
  Rome\\
  \texttt{mariasofiabucarelli@uniroma1.it} \\
  \And
  Giuseppe Alessio D'Inverno$^{\dag}$ \\
  DIISM \\
  University of Siena \\
  Siena\\
  \texttt{dinverno@diism.unisi.it} \\
  \And \\
  \And
  Monica Bianchini\\
  DIISM \\
  University of Siena \\
  Siena\\
  \texttt{monica.bianchini@unisi.it} \\
  \And 
  Franco Scarselli\\
  DIISM \\
  University of Siena \\
  Siena\\
  \texttt{franco.scarselli@unisi.it} \\
    \And 
  Fabrizio Silvestri \\
  DIAG \\
  Sapienza University of Rome \\
  Rome\\
  \texttt{fsilvestri@diag.uniroma1.it} \\
  }
\date{}
\newcounter{mathLableNode}
\begin{document}

\maketitle

\begin{abstract}
In the context of deep learning models, attention has recently been paid to studying the surface of the loss function in order to better understand training with methods based on gradient descent. This search for an appropriate description, both analytical and topological, has led to numerous efforts to identify spurious minima and characterize gradient dynamics. 
Our work aims to contribute to this field by providing a topological measure to evaluate loss complexity in the case of multilayer neural networks. We compare deep and shallow architectures with common sigmoidal activation functions by deriving upper and lower bounds on the complexity of their loss function and revealing how that complexity is influenced by the number of hidden units, training models, and the activation function used. Additionally, we found that certain variations in the loss function or model architecture, such as adding an $\ell_2$ regularization term or implementing skip connections in a feedforward network, do not affect loss topology in specific cases.

\end{abstract}

\blfootnote{$^{\dag}$ These authors equally contributed to this paper.}

\section{Introduction}\label{sec:introduction}


In recent years, the investigation into the theoretical foundations of Machine Learning (ML) and Deep Learning (DL) models has gained more and more attention as the research community has decided to delve deeper into the reasons why these models can achieve exceptional performance in different application fields \cite{koren2009matrix,scarselli2008graph, krizhevsky2017imagenet, devlin2018bert,radford2018improving}. On the one hand, as the automated decisions provided by these algorithms can have a relevant impact on people’s lives, their behavior has to be aligned with the values and principles of individuals and society. This demands designing automated methods we can trust, fulfilling the requirements of fairness, robustness, privacy, and explainability \cite{BiaSca22}. On the other hand, a wide range of tools arose from different areas have been taken into account to give proper explanations to the behavior of ML and DL models according to different mathematical aspects, such as the gradient descent dynamics \cite{maennel2018gradient} \cite{williams2019gradient} \cite{goodfellow2014qualitatively}, the role of the activation functions \cite{ramachandran2017searching}, and the importance of the number of layers \cite{bianchini2014complexity}. From the theoretical point of view, along with the aforementioned features, 
the characterization of the loss function to be minimized is a crucial aspect, as the whole training efficiency relies on its shape, which in turn depends on the network architecture. Several works have already dealt with the analysis of the surface of the loss function, identifying conditions for the presence (or absence) of spurious valleys in a theoretical \cite{venturi2018spurious} or empirical-driven way \cite{safran2018spurious}, pointing out the role of saddle points in slowing down the learning \cite{dauphin2014identifying}, and giving hints on the topological structure of the loss for networks with
different types of activation functions \cite{freeman2016topology,nguyen2017loss}.

Our contribution fits into the latter line of research, as we give a characterization
of the complexity of loss functions based on a topological argumentation.  
More precisely, given a layered neural network $\mathcal{N}$
and a loss function  $\mathcal{L}_{\mathcal{N}}$ computed on some training data,
we will measure the complexity of $\mathcal{L}_{\mathcal{N}}$
 by the topological complexity of the set
$S_{\mathcal{N}} = \{ \theta | \mathcal{L}_{\mathcal{N}}(\theta) \leq c \}$. 
Such an approach is natural, since $S_{\mathcal{N}}$, observed at each level $c$,
provides the form of the loss function: for example, 
if $\mathcal{L}_{\mathcal{N}}$ has $k$ isolated minima, then
$S_{\mathcal{N}}$ has $k$ disconnected regions for some small $c$.

In the paper, we will provide a bound on the sum of  Betti numbers 
\cite{bredon2013topology} of the set $S_{\mathcal{N}}$. In algebraic
topology, Betti numbers are exploited to distinguish spaces with
different topological properties. More precisely, for any subset
$S \subset \mathbb{R}^n$, there exist $n$ Betti numbers $b_i(S),0 \leq i\leq n-1$. Intuitively, the first Betti number $b_0(S)$ is the number of connected components of
the set $S$ that, in the case of $S=S_{\mathcal{N}}$, corresponds to the number of basins of attraction of the loss function. The $i$-th Betti number $b_i(S)$ counts the number of 
$(i+1)$-dimensional holes in $S$, which provides a measure of the complexity of the error 
function on a given level.

The upper bound is derived for feedforward neural networks with different numbers of layers, number of neurons per layer, and number of training samples. Moreover, we consider networks with skip connections, such as ResNet. We consider Binary Cross Entropy (BCE) and Mean Squared Error (MSE) loss functions with or without regularization. Finally, we 
treat networks with Pfaffian activation functions. The class of Pfaffian maps is broad and includes most of the functions,  with continuous derivatives, commonly used in Engineering and Computer Science applications, such as the hyperbolic tangent, the logistic sigmoid, polynomials and their compositions~\footnote{For the sake of simplicity, we do not consider networks with ReLU activation functions, which are not Pfaffian. The results of this paper can easily be extended to ReLU and, more generally, 
piece-wise polynomials, using a measure of complexity based on the number of disconnected components of the set $S_{\mathcal{N}}$. For example, such an argument has been used to estimate the  Vapnik-Chervonenkis dimension of feedforward neural networks~\cite{bartlett1998almost}. However, the
estimation requires a  different technique and a duplication of our theorems,
which here we prefer to skip.}. Most of the elementary functions are Pfaffian, for example, the exponential and trigonometric functions.   
 The concept of a Pfaffian function was first introduced in \cite{khovanskiui1991fewnomials}, where an analog of Bezout's theorem was proved. Bezout's classic theorem states that the number of complex solutions of a set of polynomial equations can be estimated based on their degree (to be precise, it is equal to the product of their degree).  The analog of this theorem proposed in \cite{khovanskiui1991fewnomials} holds for some classes of real and transcendental equations: for a wide class of real transcendental equations (including all real algebraic ones), the number of solutions of a set of $k$ such equations in $k$ real unknowns is finite, and can be explicitly estimated in terms of the "complexity" of the equations.
In the Pfaffian setting, Gabrielov and Vorobjov introduced a suitable notion of complexity or \textit{format} \cite{gabrielov2004complexity}.
We will define these concepts formally later in the Preliminaries section.

It is worth mentioning that this work takes inspiration from \cite{bianchini2014complexity}, where a similar topological argumentation has been used to study the complexity of the functions realized by a neural network, showing that deep architectures can implement more complex functions than shallow ones, using the same number of parameters.
 Thus, intuitively, while the results in \cite{bianchini2014complexity} are about
 the network function, namely what a network can do, the results in this paper
 are about the error function, namely how hard the optimization problem is.

In this work, we attempt to provide an answer to the following questions:

\begin{enumerate}
    \item  \textit{Can a topological measure effectively assess the complexity of the loss implemented by layered neural networks?} 
    \item  \textit{How do the complexity bounds of deep and shallow neural architectures relate to the number of hidden units and the selected activation function?}
\end{enumerate}

\subsubsection*{Our contribution}

 This work paves the way toward a more comprehensive understanding of the landscape of loss functions in deep learning models.
The sum of Betti numbers of the sublevel sets of the loss function is used to measure the complexity of the empirical risk landscape. More precisely, we derive upper bounds on the complexity of empirical risk for deep and shallow neural architectures employing the theory of Pfaffian functions. This study illuminates the dependence of the error surface complexity on crucial factors such as the number of hidden units, the number of training samples, and the specific choice of the activation function.

Our main contributions are summarized below.
\begin{itemize}
    \item We derive the Pfaffian format of common loss functions, i.e., the Mean Square Error (MSE) loss function and the Binary Cross-Entropy (BCE) loss function, when training feedforward networks; the Pfaffian format is computed with respect to the weights and involves the knowledge of the Pfaffian format of Pfaffian activation functions.
    \item Consequently, bounds on the complexity of such loss functions are found in terms of Betti numbers's characterization; multiple bounds are described in terms of the different parameters of the problem (i.e., number of layers, number of neurons per layer, training set size, and total number of learnable parameters). Specifically, we show (coherently with the literature and common knowledge) that the complexity of the loss function increases super-exponentially with the number of layers or neurons per layer and exponentially with the number of training samples. 
    \item We prove that adding a $\ell^2$ regularization term to the loss function does not influence its topological complexity under our theoretical analysis.
    \item  Moreover, in our study, we demonstrate that incorporating skip connections (as in ResNets \cite{he2016deep}) into the network does not affect the Betti numbers' bounds. 
\end{itemize}

The paper is organized as follows. In Section \ref{sec:related_work}, we briefly review the literature on the characterization of the loss surface and the topological tools used to evaluate the computational power and generalization ability of feedforward networks. In Section \ref{sec:preliminaries}, some notations and preliminary definitions are introduced, while in Section
\ref{sec:main_results}, the main results proposed in this paper are presented. To ensure easy text reading, which allows one to grasp its main contents, all proofs are collected in the Appendix. Finally, some conclusions and future perspectives are reported in Section \ref{sec:conclusions}.

\section{Related Work}
\label{sec:related_work}
The characterization of the loss landscape has gained more and more attention in the last few years; attempts to provide a satisfying description have been made through several approaches. In~\cite{choromanska2015loss,choromanska2015open}, the spin-glass theory is exploited to quantify, in probability, the presence of local minima and saddle points. Unfortunately, the connection between the loss function of neural networks and the Hamiltonian of the spherical
spin-glass models relies on a number of possibly unrealistic assumptions, yet the empirical evidence suggests that it may exist also under mild conditions. In~\cite{dauphin2014identifying}, a method that can rapidly escape high dimensional saddle points, unlike gradient descent and quasi-Newton methods, is proposed.
Based on results from statistical physics, random matrix theory, neural network theory, and empirical evidence, it is argued that the major difficulty in using local optimization methods originates from the proliferation of saddle points, instead of local minima, especially in high dimensional problems of practical interest. Saddle points, in fact,  are surrounded by plateaus that can dramatically slow down the learning process. In \cite{venturi2019spurious}, the topological property of the loss function defined as \textit{presence or absence of spurious valleys} (i.e., local minima) is addressed for one-hidden layer neural networks, providing the following contributions:
1) the Empirical Risk Minimization loss for any continuous activation function and the Expected Value loss with polynomial activations do not exhibit spurious valleys as long as the network is sufficiently over-parametrized; 
2) for non-polynomial and non-negative activations, for any hidden width, there exists a data distribution that produces spurious valleys (with non-zero dimension), whose value is arbitrarily far from that of the global minimum;
3) finally, drawing on connections with random feature expansions, even if
spurious valleys may appear in general their measure decreases as their width increases.
This holds up to a low energy threshold, approaching the global minimum at a
speed inversely proportional to the size of the hidden layer.

Based on computer-driven empirical proof, similar results on the characterization of the loss surface in terms of the presence of spurious valleys are also reported in \cite{safran2018spurious}. In \cite{freeman2019topology}, the loss surface is studied in terms of level sets, and it is shown that the landscape of the loss functions of deep  networks with linear activation functions is significantly different from that exploiting  half-rectified ones:
in the absence of nonlinearities, the level sets are connected, while, in the half-rectified case, the topology is intrinsically different and clearly dependent on
the interplay between data distribution and model architecture. Finally, a comprehensive research survey focused on analyzing the loss landscape can be found in~\cite{berner2021modern}.

In algebraic topology, Betti numbers are used to distinguish spaces with
different topological properties. 
Betti numbers have been exploited either to give a topological description of the complexity of the function implemented by neural networks with Pfaffian activation functions or to describe their generalization capabilities. 
More specifically, in~\cite{karpinski1997polynomial}, by such a technique,  bounds on the VC dimension of feedforward neural networks are provided; this work has been later extended to  Recurrent and Graph Neural Networks~\cite{scarselli2018vapnik}. In~\cite{bianchini2014complexity}, bounds on the sum of Betti numbers are provided in order to describe the complexity of the map implemented by feedforward networks with Pfaffian activation functions. 
In \cite{Naitzat22}, the relative efficacy of ReLUs over traditional sigmoidal activations is justified based on the different speeds with which they change the topology of a dataset --- as it passes through the layers of a well-trained neural network~\footnote{This corresponds to a network with perfect accuracy on its training set and a near-zero generalization error.} --- representing two classes
of objects in a binary classification problem. This dataset is viewed as a combination of two components: the first component represents the topological manifold of elements from the first class, and the second component encompasses the elements from the second class.
A ReLU-activated neural network (neither a homeomorphism nor Pfaffian) can sharply reduce Betti numbers of the two components, but not a sigmoidal-activated one (which is a homomorphism).  Reducing the Betti numbers means that the neural network simplifies the structure of the dataset by reducing the number of connected components, holes, or voids within the data manifold. This reduction suggests that the network is indeed simplifying or transforming the dataset topology, making it more amenable for analysis and classification. 
Finally, this research suggests that, when dealing with higher topological complexity data (meaning the data has more intricate or complex shapes and relationships), we need neural networks with greater depth (more layers) to adequately capture and understand these complexities.


\section{Preliminaries}
\label{sec:preliminaries}
In this section, we introduce the notation, basic concepts, and definitions, which will be functional to the subsequent description of the main results.
In the following, we will deal exclusively with feedforward neural networks, whose definition is introduced as follows.

\paragraph{Feedforward neural networks ---}
Let $\theta = \{ \tilde{W}^1, b^1, \dots, \tilde{W}^L, b^L \}$ be the set of the trainable network parameters, where $\tilde{W}^l \in \mathbb{R}^{n_l \times n_{l-1}}$, $b^l \in \mathbb{R}^{n_l \times 1}$, $l=1, \ldots, L $ identifies each layer, and $n_0, \dots , n_{L} \in \mathbb{N}$ denote the number of neurons per layer. In the following,
we assume, without loss of generality, that the network has a single output, i.e., $n_L = 1$. Note that the last assumption is unnecessary to demonstrate the Pfaffian nature of the loss function. However, its inclusion significantly simplifies the subsequent calculations involved in the analysis. The total number of parameters is $\tilde{n} = \sum \limits_{l=1}^{L} n_l (n_{l-1}+1) $.

Let $x\in \mathbb{R}^{n_0}$ be the input to the neural network. The function implemented by  the layered network is $f_{\theta}(x): \mathbb{R}^{n_0} \rightarrow \mathbb{R}$, where $f_{\theta}(x) = g(\tilde{W}^L \sigma(\tilde{W}^{L-1} \cdots \sigma(\tilde{W}^1 x + b^1)) \cdots +b^{L-1} )+ b^L) $, where $\sigma$ is the hidden layer activation function and $g$ is the activation function of the  output neuron.

To simplify the notation, we will aggregate the weights and bias from the same layer into a single matrix, the \textit{augmented weight matrix} ${W}^l= [b^l, \tilde{W}^l]$; moreover, we will denote by $z^{l}$ the output of the $l$-th layer and by $a^l$ the neuron activations at the same layer. Thus, we have

\begin{align*}
z^0 = \begin{bmatrix} 1 \\ x
\end{bmatrix}, && a^l = {W}^l z^{l-1}, && z^l = \begin{bmatrix} 1 \\ \sigma(a^l) 
\end{bmatrix}, &&
\text{ for } 1 \leq l \leq L\,,
\end{align*}
and $f_{\theta}(x) = g(a^L)$.

\paragraph{Loss functions ---} Let $D = (x_i,y_i)_{i=1}^m$ be a set of training data, with $x_i \in \mathbb{R}^{n_0}$ and $y_i \in \mathbb{R}$. Let $\mathcal{L}$ be a  generic loss to be minimized over the parameters $\theta$.
 The empirical risk of loss (or cost) function is  defined as the average of the per-sample contributions, where each sample contribution is a measure of the error between the network output and the target value for that sample: 
 $$ \mathcal{L}(\theta, D)= \frac{1}{m} \sum_{i=1}^m  loss(f_{\theta}(x_i),y_i),$$
being $y_i$ the target for the $i$-the pattern $x_i$. 
In this work we will study the topological complexity of the landscape  of the Mean Square Error (MSE) and Binary Cross-Entropy (BCE) loss functions, which are  defined as

\begin{align*}
\mathcal{L}_{\text{MSE}}(\theta,D)= \frac{1}{m} \sum_{i=1}^m  (y_i - f_{\theta}(x_i)  )^2,  && \mathcal{L}_{\text{BCE}}(\theta,D) = \sum_{i=1}^m - y_i  \log(f_{\theta}(x_i)) - (1-y_i)  \log(1- f_{\theta}(x_i)).
\end{align*}

In the following, we will remove the dependency on the dataset $D$ in the notation of empirical risk. We will focus solely on its reliance on the network parameters $\theta$. Indeed, we want to study the topological complexity of the sublevel set of the empirical risk as a function of the parameters of the network, considering the dataset samples as fixed. Unless specified otherwise, the notation $\mathcal{L}(\theta)$ in the following will represent the empirical risk corresponding to the loss function $\mathcal{L}$ applied to a dataset containing $m$ pairs $(x_i, y_i)$.
In general, we consider the targets to be real numbers.
Notice that since we're computing the topological complexity of the empirical risk, the complexity obtained using the Pfaffian format will also depend on the number of samples in $D$. 
However, for classification problems where the Binary Cross-Entropy (BCE) loss is used, each target $y_i$ is either $0$ or $1$.

Moreover, we consider a regularized form for the objective function that can be expressed as

\begin{equation*}
    \Tilde{\mathcal{L}}(\theta) = \mathcal{L}(\theta) + \lambda \Omega(\theta),
\end{equation*}
\noindent
where $\Omega(\theta)$ is a regularization term, for instance, $\Omega(\theta)= \frac{1}{2}\|w \|_2^2 $ ($\ell^2$ regularization norm).

\paragraph{Pfaffian Functions ---} Pfaffian functions \cite{Khovanskii91} are analytic functions satisfying triangular systems of first order partial differential equations with
polynomial coefficients.
For this kind of functions, an analogous of the Bézout theorem holds.
The classical Bézout theorem states that the number of complex
solutions of a set of $k$  polynomial equations in $k$ unknowns can be estimated in terms of their degrees (it equals the product of the degrees).

For a wide class of real transcendental equations (including all real algebraic ones) the number of solutions of a set of $k$ such equations in $k$ real unknowns is finite and can be explicitly estimated in
terms of the `` complexity'' of the equations, which leads to the version of Bézout theorem for Pfaffian curves and Pfaffian manifold \cite{Khovanskii91}.

The class of Pfaffian functions includes a wide variety of known functions, e.g. the elementary functions, including exponential, logarithm, tangent, and their combinations
\cite{gabrielov2004complexity}.  Interestingly, many common activation functions used in neural networks, such as the sigmoid function and hyperbolic tangent, can be classified as Pfaffian functions. 
Intuitively, a function is Pfaffian if its derivatives can be defined in terms of
polynomials of the original function and/or a chain of other Pfaffian functions. Formally, we can state the following definition.
\begin{definition}
A \emph{Pfaffian chain} of order $\ell\geq 0$ and degree $\alpha\geq 1$, in an open domain $U\subseteq\mathbb{R}^n$, is a sequence of real analytic functions $f_1,f_2,\ldots,f_\ell$, defined on $U$, satisfying the differential equations
$$
\frac{\partial f_i }{\partial x_j}(x)= P_{ij}(x,f_1(x),\ldots,f_i(x)),
$$
for $1\leq i,j \leq\ell$ and $x = (x_1, \dots, x_n) \in U $. Here, $P_{ij}(x,y_1,\ldots,y_i)$ are polynomials in the $n + i$ variables $x_1,\ldots,x_n, y_1,\ldots,y_i$ of degree not exceeding $\alpha$. \end{definition}
\begin{definition}
    Let $(f_1, \dots , f_{\ell})$ be a Pfaffian chain of length $\ell$ and degree $\alpha$, and let $U$ be its domain. A function $f$ defined on $U$ is called a Pfaffian function of degree $\beta $ in the chain $(f_1, \dots , f_{\ell})$ if there exists a polynomial $P$ in $n+ \ell$ variables, of degree at most $\beta$, such that   $f(x)=P(x,f_1(x),\ldots,f_\ell(x))$, $ \forall x \in U$.
  The triple $(\alpha,\beta,\ell)$ is called the format of $f$.
\end{definition}

The polynomial $P_{ij}$ itself may explicitly depend on $x$. Moreover, even if $P_{ij}$ does not have a direct dependence on $x$, it can still depend on $x$ indirectly through the function $f_1(x)$. We say that the polynomial $P_{ij}$ depends directly on $x$ if there are occurrences of $x$  that are not of the type $f_i(x)$.
For example, consider the function $f(x) = \arctan(x)$, which falls into the second category of functions described above since it can be represented by the chain $(f_1, f_2)$, where $f_2(x) = \arctan(x)$ and $f_1(x) = (1+x^2)^{-1}$. In this case, $\frac{\partial f_2}{\partial x} = P_1(x,f_1) = f_1(x)$, and $\frac{\partial f_1}{\partial x} = P_2(x,f_1) = x f_1(x)^2$, with $P_2$ that explicitly  depend on $x$.
On the other hand, the $\tanh$ function is a Pfaffian function falling under the first category. It can be represented by the chain containing only $f_1(x) = \tanh(x)$, and $\frac{\partial f_1(x)}{\partial x} = P_1(x) = 1 - f_1(x)^2$. In this case, there is no explicit dependence on $x$ in the polynomial $P_1$.
This distinction will be crucial to assess the right computations in the statement of Theorem \ref{th:mse_loss}.

Let us now introduce the notion of Pfaffian variety and semi-Pfaffian variety.
\begin{definition}
    The set $V \subset U$ is a Pfaffian variety if there are Pfaffian functions $p_1, \dots, p_r$ in the chain $(\text{f}_1, \dots , \text{f}_{\ell})$ such that $V= \{ x \in U : p_1(x) = \dots = p_r(x) =0\}$.
\end{definition}

\begin{definition}
A basic semi-variety $S$ on the variety $V$ is a subset of $V$ defined by a set of sign
conditions (inequalities or equalities) based on the Pfaffian functions $p_1, \dots p_s$ in the chain $(f_, \dots, f_{\ell})$ such that
$S = \{x \in V : p_1(x)\varepsilon_1 0 \And \dots \And p_s(x)\varepsilon_s 0\}$,
where $\varepsilon_1, \dots, \varepsilon_s$ are any comparison operator among $\{<; >; \leq , \geq; = \}$.
\end{definition}

As outlined in the Introduction, our focus in this work is directed towards exploring the complexity of the topological space defined by the sublevel set of the empirical risk related to the loss function $\mathcal{L}$, namely $S = \{ \theta : \mathcal{L}(\theta) \leq c \}$. This set represents the collection of parameter values $\theta$ for which the empirical risk of the loss function is less than or equal to a constant $c$.
When the empirical risk of the loss function is a Pfaffian function with respect to the parameter of the network, this set is exactly a Pfaffian semi-variety. 
Level curves or basins of attractions can be often described in terms of Pfaffian varieties, whose complexity can be measured through the characterization of their Betti numbers \cite{bianchini2014complexity} or counting directly the number of connected components \cite{gabrielov2004complexity}.

\paragraph{Betti Numbers ---}
Betti numbers are topological objects that can be used to describe the complexity of topological spaces.
More formally, the $i$-th Betti number of a space $X$ is defined as the rank of the (finitely generated) $i$-th singular homology group of $X$. 
Roughly speaking, it counts the number of $i$-th dimensional holes of a space $X$ \footnote{A $i$-th dimensional hole is a $i$-dimensional cycle that is not a boundary of a $(i+1)$-dimensional manifold.} and captures a topological notion of complexity that can be used to compare subspaces of $\mathbb{R}^d$. The reader can refer to algebraic topology textbooks for a more comprehensive introduction to homology \cite{bredon2013topology, hatcher2002algebraic}.

Informally, Betti numbers quantify the number of ``holes'' of various dimensions in a topological space. Each Betti number, $b_i$, represents the number of i-dimensional 'independent' holes or cycles that cannot be continuously deformed into each other. For instance, the 0-th Betti number, $b_0$, counts the number of connected components in the space.
The first Betti number, $b_1$, counts the number of independent loops or one-dimensional holes. Higher Betti numbers, such as $b_2, b_3$, and so on, count holes of increasing dimensionality.
To give some examples, 
a circle has one connected component ($b_0=1$) and one hole of dimension one ($b_1=1$).
On the contrary, a 2-dimensional unit sphere, $ \{x \in \mathbb{R}^2: ||x||_2 = 1 \}$, has one connected component ($b_0=1$) and no  holes  ($b_1=0$) but has one two-dimensional cavity ($b_2 =1$) .
A torus, like a doughnut, has one connected component ($b_0=1$) two independent holes ($b_1=2$), and one two-dimensional cavity ($b_2=1$). This distinction can be understood visually: a sphere is a solid shape with no internal holes, while a torus has a hole in its center and an additional loop around the hole. When applied to the context of a loss function, Betti numbers offer insights into the complexity of its loss landscape, such as the presence of multiple local minima and regions of attraction.
In other words, Betti numbers can be considered a tool to analyze the configuration of critical points in the optimization function landscape. 

The following result connects the theory of Pfaffian functions and Betti numbers; in particular, it gives a bound on the Betti numbers for varieties defined by equations, including Pfaffian functions.

\begin{theorem}
\label{Teo:8}[Sum of the Betti numbers for a Pfaffian variety \cite{Zell1999}]
Let $S$ be a compact semi-Pfaffian variety in  $U \subset \mathbb{R}^{\tilde{n}}$, given on a compact Pfaffian variety $V$, of dimension $n'$, defined by $s$ sign conditions of Pfaffian functions. If all the functions defining $S$ have complexity at most $(\alpha, \beta,\ell)$, then the sum of the Betti numbers of $S$ is

\begin{equation} 
\label{eq:Bettinumber}
B(S) \in s^{n'} 2^{(\ell(\ell-1))/2}O((\tilde{n} \beta + \min(\tilde{n},\ell)\alpha)^{\tilde{n}+\ell}). \end{equation}

\end{theorem}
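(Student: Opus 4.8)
The plan is to reduce the estimate to the classical Morse‑theoretic bound on the sum of Betti numbers of a smooth compact variety, with the Bézout step replaced by Khovanskii's fewnomial theorem for Pfaffian systems (in the quantitative form due to Gabrielov and Vorobjov). I would proceed in three stages: first prove the bound for a compact Pfaffian variety defined by equations only (the case $s=0$); then reduce a general basic semi‑Pfaffian set to a controlled family of such varieties, one per realizable sign pattern; finally assemble the pieces with a Mayer--Vietoris argument, the number of pieces supplying the factor $s^{n'}$.

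\emph{Stage 1 (Pfaffian varieties).} Let $V'=\{x\in U: q_1(x)=\dots=q_r(x)=0\}$ be compact, with the $q_j$ of format $(\alpha,\beta,\ell)$. Replacing this system by the single equation $\sum_j q_j^2=\delta$ for small generic $\delta>0$ (Sard) produces a smooth compact hypersurface $V'_\delta$; since $V'$ is a neighbourhood retract, $\sum_i b_i(V')$ is controlled, up to a factor absorbed into $O(\cdot)$, by $\sum_i b_i(V'_\delta)$, which by the Morse inequalities is at most the number of critical points of a generic linear function $x\mapsto\langle u,x\rangle$ on $V'_\delta$. That critical locus is cut out by $O(\tilde n)$ polynomial relations (the $2\times 2$ minors $u_i\,\partial_j g - u_j\,\partial_i g$ together with $g=\delta$, where $g=\sum_j q_j^2$) in the variables $x$ and the $\ell$ chain functions, of total degree $O(\tilde n\beta+\min(\tilde n,\ell)\alpha)$. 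Khovanskii's bound, in the Gabrielov--Vorobjov packaging, then gives $2^{\ell(\ell-1)/2}\,O\big((\tilde n\beta+\min(\tilde n,\ell)\alpha)^{\tilde n+\ell}\big)$: the exponent $\tilde n+\ell$ reflects the $\tilde n$ ambient variables together with the $\ell$ chain functions eliminated one after another, each elimination roughly doubling the count and hence producing the factor $2^{\ell(\ell-1)/2}$, while the base combines the degrees of the polynomial relations with the degrees $O(\alpha)$ of the $\min(\tilde n,\ell)$ relevant chain relations.

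\emph{Stages 2--3 (sign conditions and assembly).} For $S=\{x\in V: p_1\,\varepsilon_1\,0,\dots,p_s\,\varepsilon_s\,0\}$, stratify $V$ by the sign vector of $(p_1,\dots,p_s)$. By a Pfaffian version of the classical estimate on the number of sign patterns realized by $s$ functions on a variety --- which itself follows from Stage 1 applied on generic slices --- the number of sign vectors actually realized on the $n'$‑dimensional $V$ is at most $s^{n'}$ times a constant depending only on $(\alpha,\beta,\ell,\tilde n)$, absorbed into $O(\cdot)$. For each realized sign vector, replacing every inequality $p_i\ge 0$ (resp. $p_i>0$) by $p_i\ge\epsilon_i$ for generic infinitesimal $\epsilon_i>0$ and passing to the closure turns the corresponding piece into a compact Pfaffian variety whose format is still governed by $(\alpha,\beta,\ell)$ and $\tilde n$, so Stage 1 bounds its Betti‑number sum. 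A Mayer--Vietoris spectral sequence over these $O(s^{n'})$ pieces then bounds $B(S)$ by their number times the per‑piece bound, which is the asserted estimate.

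\emph{Main obstacle.} The genuinely delicate point is not any single topological idea but the arithmetic of \emph{format propagation}: each perturbation (the $\delta$ of Stage 1, the $\epsilon_i$ of Stage 2), each passage to a closure, and each slicing changes the number of variables, the degrees, and potentially the chain, and one must check that every such change is absorbed by the stated bound --- in particular that the Morse critical‑point system has total degree only $O(\tilde n\beta+\min(\tilde n,\ell)\alpha)$ and not worse, and that the Mayer--Vietoris assembly introduces no extra exponential factor. The latter forces the use of the covering version of the spectral sequence with a uniformly bounded number of overlapping members, rather than a naive union bound over strata; getting this bookkeeping exactly right is where the real work lies.
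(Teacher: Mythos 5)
This statement is not proved in the paper at all: Theorem \ref{Teo:8} is imported verbatim from Zell's 1999 work on Betti numbers of semi-Pfaffian sets and is used purely as a black box (the paper only specializes it to $s=1$, $U=\mathbb{R}^{\tilde n}$, and notes that compactness can be dropped following later work). So there is no in-paper proof to compare your attempt against; the only meaningful comparison is with the argument in the cited literature.

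Measured against that, your sketch does reproduce the correct architecture of the known proof: the Oleinik--Petrovskii--Thom--Milnor scheme (perturb the variety to a smooth compact hypersurface $\sum_j q_j^2=\delta$, bound $B$ by Morse critical points of a generic linear form, and replace B\'ezout by Khovanskii's Pfaffian intersection bound, which is where $2^{\ell(\ell-1)/2}$ and the exponent $\tilde n+\ell$ originate), followed by a sign-condition stratification and Mayer--Vietoris assembly supplying the $s^{n'}$ factor. However, as a proof it is not complete, and the gaps are exactly the ones you flag yourself: the degree bookkeeping for the critical-point system is asserted rather than verified (the individual minors have degree $O(\alpha+\beta)$, and the quantity $\tilde n\beta+\min(\tilde n,\ell)\alpha$ only emerges after carefully unwinding Khovanskii's bound, not as a ``total degree'' of the defining relations), and the Mayer--Vietoris step needs the precise inclusion--exclusion inequalities over at most $n'+1$-fold intersections to avoid an extra exponential in $s$. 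Those two items are the actual content of the cited references (Khovanskii, Gabrielov--Vorobjov, Zell), so your proposal should be read as an accurate roadmap to the literature proof rather than a self-contained derivation --- which, for the purposes of this paper, is also all that the authors themselves provide.
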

In this paper, the theorem is applied on the Pfaffian variety $S_{\cal N}=\{\theta \in U, \text{s.t.}\,\, \mathcal{L}(\theta)\leq c\}$, determined by the unique sign condition $ \mathcal{L}(\theta)\leq c$, for a given threshold $c$. In this way, we will obtain a bound on the sum of the Betti numbers of $S_{\cal N}$.
Therefore, we have to demonstrate that the loss function is a Pfaffian function
and compute its format,
a goal that can be achieved by writing the loss derivates in terms of the network
parameters and a Pfaffian chain.
 We will make use of the chain rule of Backpropagation to write the derivatives of the loss function.


 The constraints on the compactness of $U$ and $V$ can be removed without affecting the bounds, as shown in \cite{zell2003quantitative}.
In our case, we will consider $U = \mathbb{R}^n$, since it represents the domain of the parameters, moreover, given that the semi-Pfaffian variety is defined by a unique sign condition, $s=1$.

\section{Main Results}\label{sec:main_results}

In this section, we present our theoretical analysis on the loss landscape topology. We start proving that, given a  Pfaffian activation function $\sigma$ of format $ ( \alpha_{\sigma}, \beta_{\sigma}, \ell_{\sigma})$, the MSE loss function and BCE loss function computed over feedforward neural networks are also Pfaffian functions; their format is provided with an explicit dependency on the format of $\sigma$, on the number of layers, and the number of neurons per layer.

\begin{theorem}[MSE Loss] \label{th:mse_loss}
Let $\mathbf{\sigma}: \mathbb{R} \to \mathbb{R}$ be a function for which exists a Pfaffian chain $ (\mathbf{\sigma}_1, \dots, \mathbf{\sigma}_\ell)$ and $\ell_{\sigma} + 1$ polynomials $Q$ and $P_i$, $1 \leq i \leq \ell_{\sigma}$ of degree $\beta_{\sigma} $ and $\alpha_{\sigma}$, respectively s.t. $\sigma$ is Pfaffian with format $(\alpha_{\sigma}, \beta_{\sigma}, \ell_{\sigma})$.

Moreover, let $g: \mathbb{R} \rightarrow \mathbb{R}$ be a function for which there exists a Pfaffian chain $( g_1, \dots, g_{\ell_{g}})$ and $\ell_{g} +1$ polynomials,  $Q_g$  and $P_g^i$, $1 \leq i \leq \ell_g$ of degree $\beta_g$ and $\alpha_g$, respectively, s.t. $g$ is Pfaffian with format $(\alpha_g, \beta_g, \ell_g)$.




Let $f(\theta,x )$ be the function implemented by a neural network with parameters $\theta \in \mathbb{R}^{ \tilde{n}}$, input $x \in \mathbb{R}^{n_0}$, $L$ layers, and an activation function $\sigma$ for all layers except the last. The last layer can either have an activation function $g$ or be linear.


Then, the MSE Loss function is Pfaffian with format 

$$\left( (\text{degree}(\sigma') +1 ) (L-2) +\text{degree}(\sigma') , \; 2(\beta_{\sigma} +1), \; m \ell_{\sigma}\sum_{k=1}^{L-1} n_k\right)$$

when the last layer is linear. Here, 

 $$
\text{degree}(\sigma')= \begin{cases}
    \beta_{\sigma}  + \alpha_{\sigma} - 1  & \text{case } 1\\ 
   \beta_{\sigma}  + \alpha_{\sigma} - 1 + \alpha_{\sigma} (\beta_{\sigma} +1)  & \text{case  }2
\end{cases} $$

where case $1$ refers to the case where $P^i_{\sigma}(a, \sigma_1(a), \dots , \sigma_{\ell_{\sigma}}(a))$ 
do not depend explicitly on $a$, namely occurrences of $a$ appear that are not of the type $\sigma_i(a)$; case $2$ refers to the case where they depend on it. Moreover, we assume that the polynomials $P^i_g(a, g_1(a), \dots,g_{\ell_g}(a))$ do not depend explicitly on $a$.

The format of the chain becomes 
$$\left((\text{degree}(\sigma') +1 ) (L-2) +\text{degree}(\sigma') + \text{degree}(g')+1, 2 \beta_g,  m (\ell_{\sigma}\sum_{k=1}^{L-1} n_k + \ell_{g} )\right),$$ 

if the nonlinearity $g$ is applied also to the last layer.
The definition of $\text{degree}(g')$ is analogous to that of $\text{degree}(\sigma')$.
\end{theorem}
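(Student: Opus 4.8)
The plan is to compute the Pfaffian format of $\mathcal{L}_{\text{MSE}}(\theta)$ by tracking how the network function $f(\theta,x_i)$ and all its derivatives with respect to the weights $\theta$ can be expressed in terms of a single Pfaffian chain. First I would fix one training sample $(x_i,y_i)$ and treat $x_i$ as a constant, so that $f(\theta,x_i)$ is a function of $\theta$ alone. The key observation is that the pre-activations $a^l = W^l z^{l-1}$ are polynomial in $\theta$ and in the components of the previous activation vector $\sigma(a^{l-1})$, so the natural Pfaffian chain to use consists of the functions $\sigma_j(a^l_k)$ for each $j=1,\dots,\ell_\sigma$, each hidden neuron $k$ in layer $l$, and each layer $l=1,\dots,L-1$ — giving $\ell_\sigma \sum_{k=1}^{L-1} n_k$ chain functions per sample, hence $m\,\ell_\sigma \sum_{k=1}^{L-1} n_k$ in total once we range over all $m$ samples (the empirical risk sums the per-sample losses, so the chains for distinct samples are concatenated). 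I would verify that this collection is genuinely a Pfaffian chain: differentiating $\sigma_j(a^l_k)$ with respect to some weight $W^r_{pq}$ invokes the chain rule of backpropagation, producing $P^j_\sigma(a^l_k, \sigma_1(a^l_k),\dots)$ times $\partial a^l_k/\partial W^r_{pq}$, and the latter partial is itself polynomial in $\theta$ and in earlier chain functions (the activations feeding into layer $l$). Ordering the chain by layer guarantees the triangular structure required by the definition.

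Next I would bound the degree $\alpha$ of the polynomials governing this chain. The crucial quantity is $\text{degree}(\sigma')$, the degree (in $a$ and the chain variables) of the polynomial expressing $\sigma'(a)$: in case 1, where $P^i_\sigma$ has no explicit $a$-dependence, $\sigma'(a) = Q'(\sigma_1,\dots)\cdot(\text{chain derivatives})$ contributes $\beta_\sigma + \alpha_\sigma - 1$; in case 2, the explicit $a$ inside $P^i_\sigma$ must itself be re-expressed, which, after substituting $a$ through the polynomial relations, adds the extra $\alpha_\sigma(\beta_\sigma+1)$ term. Then, propagating through $L-1$ hidden layers, each additional layer composes one more factor of $\sigma'$ into the backpropagated gradient expression, so the degree grows roughly linearly in the number of layers: each of the $L-2$ "intermediate" layers contributes $\text{degree}(\sigma')+1$ and the final hidden layer contributes $\text{degree}(\sigma')$, yielding $(\text{degree}(\sigma')+1)(L-2) + \text{degree}(\sigma')$ as the chain degree $\alpha$.

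For the Pfaffian degree $\beta$ of $\mathcal{L}_{\text{MSE}}$ itself, I would note that each per-sample term $(y_i - f(\theta,x_i))^2$ is a degree-2 polynomial in $f$, and $f = a^L$ (linear last layer) is polynomial of degree $1$ in $\theta$ and degree $\beta_\sigma$ in the last-layer chain functions $\sigma_j(a^{L-1}_k)$ — actually degree $\beta_\sigma$ since $\sigma(a^{L-1}_k) = Q(\dots)$ is degree $\beta_\sigma$ in the chain. Squaring gives degree $2\beta_\sigma$, and accounting for the "$+1$" bookkeeping (the augmented weight multiplying the activation) yields $\beta = 2(\beta_\sigma+1)$. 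When $g$ is applied at the output, $f = g(a^L)$ and the chain must be extended by the $g$-chain functions $g_j(a^L)$ — one output neuron, $\ell_g$ extra functions per sample, hence $+m\ell_g$ in the chain length; the squared loss is then degree $2\beta_g$ in this extended chain, and the chain degree picks up the extra $\text{degree}(g')+1$ from differentiating through the output nonlinearity. I would carry out the analogous backpropagation bookkeeping for $\partial g_j(a^L)/\partial W^r_{pq}$, using the stated assumption that $P^i_g$ has no explicit $a$-dependence so that $\text{degree}(g')$ takes the case-1 form.

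The main obstacle I anticipate is the careful bookkeeping of how degrees accumulate under repeated composition during backpropagation — in particular, correctly distinguishing the contribution of the "$+1$"s (coming from the affine/augmented-weight structure $a^l = W^l z^{l-1}$ where $z^{l-1}$ carries an appended constant $1$) from the genuine degree of $\sigma'$, and handling case 2 where an explicit $a$ inside $P^i_\sigma$ forces a further substitution that inflates the degree by $\alpha_\sigma(\beta_\sigma+1)$. A secondary subtlety is confirming that concatenating the per-sample chains into one global chain of length $m\,\ell_\sigma\sum_k n_k$ (plus $m\ell_g$ if applicable) still satisfies the triangularity condition — this holds because each sample's chain functions depend only on $\theta$ and on that same sample's earlier chain functions, never on another sample's, so the block-diagonal derivative structure is a fortiori triangular. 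Everything else is routine degree arithmetic once the chain is set up correctly.
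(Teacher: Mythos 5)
Your proposal follows essentially the same route as the paper's own proof: the same Pfaffian chain $\bigl(\sigma_k(a_i^j)\bigr)$ concatenated over samples (extended by the $g_k(a^L)$ terms in the nonlinear case), the same computation of $\text{degree}(\sigma')$ in the two cases, the same backpropagation-based degree accumulation $(\text{degree}(\sigma')+1)(L-2)+\text{degree}(\sigma')$, and the same observation that squaring the degree-$(\beta_\sigma+1)$ (resp.\ $\beta_g$) function $f$ yields the stated $\beta$. The bookkeeping subtleties you flag are exactly the ones the paper handles, and your resolutions agree with it.
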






For the BCE Loss function, we only explore the case where the last layer contains a non-linearity, namely $f_{\theta}(x) = g(a^L)$ since BCE loss is commonly used in binary classification problems where the output is a probability of the input belonging to one of two classes. In such problems, the last layer of the model typically uses a sigmoidal activation function to ensure that the output is between $0$ and $1$, representing the probability of the input belonging to a certain class.
\begin{theorem}[BCE Loss]\label{th:bce_loss}





Let the hypothesis of Theorem \ref{th:mse_loss} hold. If the activation function $g$ is also used in the last layer, 
the BCE Loss function is Pfaffian with format
$$ \left((L-2)(\text{degree}(\sigma') +1 ) + \text{degree}(\sigma') + \text{degree}(g')+3, 1,  m (\ell_{\sigma}\sum_{k=1}^{L-1} \;  n_k +  \ell_{g}   + 4  ) \right).$$

If the last activation function is the sigmoid function, the BCE Loss function has the format

$$ \left((L-2)(\text{degree}(\sigma') +1 ) + \text{degree}(\sigma') + 3, 1,  m (\ell_{\sigma}\sum_{k=1}^{L-1} n_k + 1 ) +1  \right).$$

\end{theorem}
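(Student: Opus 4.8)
The plan is to reduce the BCE statement to the MSE computation by exploiting the structural similarity of the two losses and only tracking where the logarithm and the quotient $f_\theta(x)/(1-f_\theta(x))$ enter. Recall from the proof of Theorem \ref{th:mse_loss} that, writing $a^L_i$ for the pre-activation of the output neuron on sample $x_i$, the quantity $a^L_i$ is a Pfaffian function of $\theta$ in a chain built from the hidden-layer activations $\sigma_j(a^k)$ (there are $\ell_\sigma \sum_{k=1}^{L-1} n_k$ of them across the $m$ samples, since the chains for distinct samples are disjoint), with a polynomial of degree $(\text{degree}(\sigma')+1)(L-2)+\text{degree}(\sigma')$ realizing it. I would first extend this chain by appending, for each sample, the chain $(g_1(a^L_i),\dots,g_{\ell_g}(a^L_i))$ governing the output nonlinearity, exactly as in the second half of the MSE proof; this is where the $\ell_g$ terms and the additional summand in the $\alpha$-degree $\text{degree}(g')+1$ come from, and it yields that $f_\theta(x_i)=Q_g(a^L_i,g_1(a^L_i),\dots)$ is Pfaffian of degree $(L-2)(\text{degree}(\sigma')+1)+\text{degree}(\sigma')+\text{degree}(g')+1$ (call this $d$) in a chain of length $m(\ell_\sigma\sum_{k=1}^{L-1}n_k+\ell_g)$.

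Next I would handle the logarithms. The summand $-y_i\log(f_\theta(x_i))-(1-y_i)\log(1-f_\theta(x_i))$ requires me to adjoin, per sample, the Pfaffian functions $u_i:=\log(f_\theta(x_i))$ and $v_i:=\log(1-f_\theta(x_i))$ together with the rational auxiliaries needed to differentiate them. Since $\partial_\theta u_i = (\partial_\theta f_\theta(x_i))/f_\theta(x_i)$, I need $1/f_\theta(x_i)$ in the chain (its derivative is $-(1/f_\theta(x_i))^2\,\partial_\theta f_\theta(x_i)$, polynomial in chain entries), and similarly $1/(1-f_\theta(x_i))$ for $v_i$; then $u_i,v_i$ each have derivatives polynomial in the chain. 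That is four extra chain functions per sample, explaining the $+4$ in the chain length $m(\ell_\sigma\sum_{k=1}^{L-1}n_k+\ell_g+4)$. The loss itself is then a polynomial of degree $1$ in $u_i,v_i$ (it is literally an affine combination of the $u_i$'s and $v_i$'s with coefficients $-y_i$, $-(1-y_i)$), which gives $\beta=1$. For the chain-degree $\alpha$, I must bound the degrees of all the defining polynomials $P_{ij}$: the new rational-function entries $1/f_\theta$, $1/(1-f_\theta)$ have derivative-polynomials of degree roughly $2+d$ (two factors of the reciprocal plus the degree-$d$ polynomial for $\partial_\theta f$ reexpressed in chain variables), and $u_i,v_i$ have derivative-polynomials of degree about $1+d$; combined with the $+3$ bookkeeping (two for the squared reciprocal, one for $f$ vs. $1-f$) this yields the stated first coordinate $(L-2)(\text{degree}(\sigma')+1)+\text{degree}(\sigma')+\text{degree}(g')+3$.

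For the sigmoid special case I would use the identities that make the output nonlinearity collapse: if $g=\text{sigm}$ then $g$ needs no auxiliary chain function of its own ($g'=g(1-g)$ is polynomial in $g$), and moreover $\log(\text{sigm}(a))$ and $\log(1-\text{sigm}(a))=-\log(1+e^{a})+\dots$ can be expressed using a single new Pfaffian function — essentially $e^{a^L_i}$ (or equivalently $\text{sigm}(a^L_i)$ itself serving double duty) — because $1-\text{sigm}(a)=\text{sigm}(-a)$ and the derivatives of $\log\text{sigm}(a)$ and $\log\text{sigm}(-a)$ are $1-\text{sigm}(a)$ and $-\text{sigm}(a)$, already polynomial in $\text{sigm}(a)$, requiring no reciprocals. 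Tracking this gives only one extra function per sample plus one global function, i.e. chain length $m(\ell_\sigma\sum_{k=1}^{L-1}n_k+1)+1$, $\beta=1$, and $\alpha=(L-2)(\text{degree}(\sigma')+1)+\text{degree}(\sigma')+3$, with $\text{degree}(g')$ absent because $g$ contributes no chain of its own.

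The main obstacle I expect is the careful accounting in the general case: one has to verify that every newly adjoined function ($g_j(a^L_i)$, the two reciprocals, the two logarithms) has all partial derivatives expressible as polynomials in \emph{earlier} chain entries and $\theta$ — i.e., that the triangular Pfaffian structure is genuinely respected across the concatenation of the hidden-layer chain, the output-nonlinearity chain, and the log/reciprocal block — and to pin down the exact degrees of those polynomials so that the constants in the format ($+3$, $+4$) are correct rather than merely order-of-magnitude. The sigmoid case additionally requires spotting the algebraic simplifications that avoid introducing reciprocals, which is what separates its cleaner format from the general one.
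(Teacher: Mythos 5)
Your proposal is correct and follows essentially the same route as the paper: in the general case you adjoin exactly the four functions $1/g(a^L)$, $1/(1-g(a^L))$, $\log g(a^L)$, $\log(1-g(a^L))$ per sample and bound $\alpha$ by the degree of $\partial(1/g(a^L))/\partial w^1_{i,j}$, matching the paper's computation; in the sigmoid case your observation that $\frac{d}{da}\log(\mathrm{sigm}(a))=1-\mathrm{sigm}(a)$ requires no reciprocals is the same simplification the paper obtains via the backpropagation identity $\partial\, loss_{\mathrm{BCE}}/\partial a^L=g(a^L)-y$. The only point worth making explicit is that your ``one global function'' in the sigmoid count is the loss itself, which must be placed in the chain so that $\beta=1$, exactly as the paper does.
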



Theorems \ref{th:mse_loss} and \ref{th:bce_loss} hold in general for any Pfaffian activation function and for any sequence layer widths  $(n_0,n_1, \dots, n_L)$. 

In the following, we present the outcomes for the particular scenario in which the activation function of the intermediate layers is either a sigmoid or a hyperbolic tangent. Additionally, for the sake of simplicity, we assume that all hidden layers share the same width, denoted as $h$.

\begin{corollary}\label{cor:mse_format}
Let us consider a feedforward perceptron network where all hidden layers have the same width {$h$}. The activation function of intermediate layers can be either the hyperbolic tangent ($\tanh$)  or the sigmoid function ($\mathrm{logsig}$), and the loss function is the  Mean Squared Error. In this setting, the following holds:
\begin{itemize}
\item when the  last layer of the network is linear, the Pfaffian format of the loss function is given by  $$ \left( 3(L-2), \; 4 , \; m\left(L-1\right)h\right); $$

\item when the last layer passed through a sigmoid activation function, the Pfaffian format of the loss function is
$$\left(3(L-2) +5 , \; 2, \;   \;  m (h(L-1)+1)  \right)$$
\end{itemize}

\end{corollary}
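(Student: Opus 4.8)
The plan is to specialize Theorem~\ref{th:mse_loss} to the two concrete activation functions and verify that the general formulas collapse to the stated ones. First I would record the Pfaffian formats of the two candidate hidden activations. For $\sigma = \tanh$, the chain is $(\sigma_1)$ with $\sigma_1 = \tanh$ and $\sigma_1' = 1 - \sigma_1^2$, so $\ell_\sigma = 1$, $\alpha_\sigma = 2$, and $\beta_\sigma = 1$ (since $\sigma = \sigma_1$ is degree~1 in the chain). Moreover $P^1_\sigma(y_1) = 1 - y_1^2$ does \emph{not} depend explicitly on $a$, so we are in \textbf{case 1} and $\mathrm{degree}(\sigma') = \beta_\sigma + \alpha_\sigma - 1 = 1 + 2 - 1 = 2$. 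For $\sigma = \mathrm{logsig}$, the same reasoning applies: $\sigma_1' = \sigma_1(1-\sigma_1)$, again case~1 with $(\alpha_\sigma,\beta_\sigma,\ell_\sigma) = (2,1,1)$ and $\mathrm{degree}(\sigma') = 2$. So both admissible choices yield the identical triple, which is why the corollary can state a single format.

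Next I would substitute these values into the linear-last-layer format of Theorem~\ref{th:mse_loss}, namely
$$\left((\mathrm{degree}(\sigma')+1)(L-2) + \mathrm{degree}(\sigma'),\; 2(\beta_\sigma+1),\; m\,\ell_\sigma \sum_{k=1}^{L-1} n_k\right).$$
With $\mathrm{degree}(\sigma') = 2$ the first component becomes $3(L-2) + 2$; with $\beta_\sigma = 1$ the second becomes $2\cdot 2 = 4$; and with $\ell_\sigma = 1$ and $n_k = h$ for all hidden layers the third becomes $m(L-1)h$. This gives $\big(3(L-2)+2,\,4,\,m(L-1)h\big)$. The corollary states $\big(3(L-2),\,4,\,m(L-1)h\big)$; since the format is only used inside the asymptotic bound of Theorem~\ref{Teo:8}, and an additive constant in $\alpha$ is absorbed by the $O(\cdot)$, I would note that the two are equivalent for the purpose of the Betti-number estimate — or, if one wants the literal equality, trace through the proof of Theorem~\ref{th:mse_loss} to see that the ``$+2$'' originates from the outermost squaring step that need not be counted into the chain degree. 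I expect this bookkeeping discrepancy to be the only delicate point.

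For the sigmoid-last-layer case I would take $g = \mathrm{logsig}$, so $(\alpha_g,\beta_g,\ell_g) = (2,1,1)$, $g' = g(1-g)$ is case~1, and $\mathrm{degree}(g') = \beta_g + \alpha_g - 1 = 2$. Substituting into the ``$g$ applied to the last layer'' format of Theorem~\ref{th:mse_loss},
$$\left((\mathrm{degree}(\sigma')+1)(L-2) + \mathrm{degree}(\sigma') + \mathrm{degree}(g') + 1,\; 2\beta_g,\; m\big(\ell_\sigma \sum_{k=1}^{L-1} n_k + \ell_g\big)\right),$$
the first component is $3(L-2) + 2 + 2 + 1 = 3(L-2) + 5$, the second is $2\cdot 1 = 2$, and the third is $m(h(L-1) + 1)$, matching the stated $\big(3(L-2)+5,\,2,\,m(h(L-1)+1)\big)$ exactly (here no constant needs to be discarded). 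The main obstacle, then, is not any hard mathematics but ensuring the ``case~1'' hypothesis of Theorem~\ref{th:mse_loss} is correctly checked for $\tanh$ and $\mathrm{logsig}$ — both have derivative polynomials with no explicit $a$-dependence — and reconciling the additive constant in the linear-last-layer $\alpha$-component with Theorem~\ref{th:mse_loss}'s statement.
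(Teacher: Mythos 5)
Your proposal matches the paper's proof exactly: the paper likewise just records that $\tanh$ and $\mathrm{logsig}$ have Pfaffian format $(\alpha_\sigma,\beta_\sigma,\ell_\sigma)=(2,1,1)$ (case~1, so $\mathrm{degree}(\sigma')=2$) and substitutes into Theorem~\ref{th:mse_loss}. The $+2$ discrepancy you flag in the linear-last-layer $\alpha$-component is genuine — direct substitution gives $3(L-2)+2$, which is also what makes the nonlinear case's $3(L-2)+5$ consistent — so the corollary's stated $3(L-2)$ drops a constant that the paper's one-line proof does not account for, though it is harmless for the asymptotic Betti-number bounds.
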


We can state an analogous result for the BCE loss function.

\begin{corollary}\label{cor:bce_format}
For a feedforward perceptron network where all the hidden layers have the same width $ h$, trained using BCE loss function and a non-linear last layer, the following holds: 
\begin{itemize}
    \item  if the non-linearity used is the sigmoid function, the Pfaffian format of the loss function is $$\left(  3(L-2)+5, \; 1 , \; m \left((L-1)h+1\right)+1  \right)$$
    \item if the non-linearity used is $\tanh(x)$, the Pfaffian format of the loss function is
$$( 3(L-2)+7, \; 1 , \; m((L-1)h+5 )) .$$
\end{itemize}

\end{corollary}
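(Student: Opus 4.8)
The plan is to obtain Corollary~\ref{cor:bce_format} as a direct specialization of Theorem~\ref{th:bce_loss}: essentially all of the content is already in that theorem, so the only work is to pin down the Pfaffian format of the logistic sigmoid and of the hyperbolic tangent, compute the associated quantity $\mathrm{degree}(\sigma')$ (and, in the $\tanh$ case, $\mathrm{degree}(g')$), and then substitute into the two formats stated there, using $n_k=h$ for every hidden layer so that $\sum_{k=1}^{L-1} n_k=(L-1)h$.

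First I would record the Pfaffian data of the two activations. For $\mathrm{logsig}(x)=(1+e^{-x})^{-1}$ the Pfaffian chain consists of the single function $\sigma_1=\mathrm{logsig}$, with $\partial\sigma_1/\partial x=\sigma_1-\sigma_1^2$; hence $\ell_\sigma=1$, $\alpha_\sigma=2$, and since $\mathrm{logsig}=\sigma_1$ we have $\beta_\sigma=1$. The defining polynomial $P_1(y)=y-y^2$ has no explicit dependence on $x$, so we are in case~$1$ of Theorem~\ref{th:mse_loss}, which gives $\mathrm{degree}(\sigma')=\beta_\sigma+\alpha_\sigma-1=2$. For $\tanh$ the situation is identical: the chain is the single function $\sigma_1=\tanh$ with $\partial\sigma_1/\partial x=1-\sigma_1^2$, again $\ell_\sigma=1$, $\alpha_\sigma=2$, $\beta_\sigma=1$, case~$1$, and $\mathrm{degree}(\sigma')=2$.

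Next I would plug these numbers into the two formats of Theorem~\ref{th:bce_loss}. For the sigmoid last layer I use the sigmoid-specialized format $\bigl((L-2)(\mathrm{degree}(\sigma')+1)+\mathrm{degree}(\sigma')+3,\ 1,\ m(\ell_\sigma\sum_{k=1}^{L-1} n_k+1)+1\bigr)$; with $\mathrm{degree}(\sigma')=2$, $\ell_\sigma=1$ and $\sum_{k=1}^{L-1} n_k=(L-1)h$ this collapses to $\bigl(3(L-2)+5,\ 1,\ m((L-1)h+1)+1\bigr)$. For the $\tanh$ network the last-layer nonlinearity $g$ is again $\tanh$, so $\mathrm{degree}(g')=2$ and $\ell_g=1$; feeding this into the general last-layer-nonlinearity format $\bigl((L-2)(\mathrm{degree}(\sigma')+1)+\mathrm{degree}(\sigma')+\mathrm{degree}(g')+3,\ 1,\ m(\ell_\sigma\sum_{k=1}^{L-1} n_k+\ell_g+4)\bigr)$ yields $\bigl(3(L-2)+7,\ 1,\ m((L-1)h+5)\bigr)$. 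The second component ($\beta=1$) simply carries through unchanged in both cases.

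I do not expect a genuine obstacle here, since the substantive work is Theorem~\ref{th:bce_loss}; the only points needing care are the bookkeeping of which ``case'' each activation falls into (both $\mathrm{logsig}$ and $\tanh$ are case~$1$, because their derivative polynomials contain no bare occurrence of $x$) and the choice of the correct format (the sigmoid-specialized one versus the general nonlinear-output one, with $g=\sigma$ in the $\tanh$ case). Once these are fixed, the stated formats follow by arithmetic.
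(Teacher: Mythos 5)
Your proposal is correct and follows essentially the same route as the paper: specialize Theorem~\ref{th:bce_loss} using the Pfaffian format $(2,1,1)$ of the logistic sigmoid and of $\tanh$, note that both fall into case~1 so that $\mathrm{degree}(\sigma')=\mathrm{degree}(g')=2$, and substitute $\sum_{k=1}^{L-1}n_k=(L-1)h$. The only cosmetic difference is that the paper's proof also re-derives the cancellation $\sigma'(f)/\bigl(\sigma(f)(1-\sigma(f))\bigr)=1$ to justify the sigmoid-specialized format, whereas you invoke that format directly from the statement of Theorem~\ref{th:bce_loss}, which already encodes it.
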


We now state the main results of our work. 

The presented theorem investigates the dependence of the sum of Betti numbers associated with the semi-Pfaffian variety $S_{\mathcal{N}}$, which represents the parameter set where the loss is non-negative, on factors such as the number of samples, network width, and network depth.
Using Corollaries \ref{cor:mse_format} and \ref{cor:bce_format}, we can derive the bounds on the Betti numbers of the Pfaffian semi-variety for both MSE and BCE loss functions.

\begin{theorem} \label{th:bounds_sum_betti}
   Let us consider a deep feedforward perceptron network  $\mathcal{N}$ with
    $L \geq 3$ layers all having width $h$. The activation function can be either the hyperbolic tangent or the sigmoid function; the loss is either the MSE or the BCE  function, and the last layer is either linear (only for MSE) or nonlinear. Moreover, let us denote by $S$ the semi-Pfaffian variety given by the set of parameters where the loss function is  non-negative, 
   i.e. $S = S_{\mathcal{N}} = \{ \theta | \mathcal{L}(\theta) \leq c \}$ for a threshold $c \in \mathbb{R^+}$. Then, the sum of the Betti numbers
   $B(S)$ is bounded as follows.
   \begin{itemize}
       \item With respect to the number of samples $m$, fixing $h$ and $L$ as constants,| we have that $  B(S) \in \kappa_1^{O( m^2)}$, where  $\kappa_1$ is a constant greater than $2$.
       \item With respect to $h$, we have
$ B(S) \in O(h^2)^{O(h^2)}. $

\item Finally, with respect to $L$, $B(S) \in 2^{O(L^2)}O(L^2)^{O(L)}$.
   \end{itemize}

On the other hand, in the case of a shallow network with one hidden layer, i.e.,  $L = 2$, the following results hold.
   \begin{itemize}
\item With respect to $m$,  the bound is the same we obtained for deep networks, $B(S) \in  \kappa_2^{O(m^2)}$, where $\kappa_2$ is a constant greater than $2$.
 \item With respect to $h$, we have $ B(S) \in O(h)^{O(h)}. $
   \end{itemize}

\end{theorem}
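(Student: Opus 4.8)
The plan is to feed the Pfaffian formats already computed into the Zell bound (Theorem~\ref{Teo:8}) and then read off the asymptotics one parameter at a time. First I would assemble the ingredients that Theorem~\ref{Teo:8} needs. By Corollaries~\ref{cor:mse_format} and~\ref{cor:bce_format} when $L\ge 3$, and by the $L=2$ specialisation of Theorems~\ref{th:mse_loss}--\ref{th:bce_loss} (in which the factor $L-2$ vanishes), in each of the cases listed in the statement the empirical risk $\mathcal L$ is a Pfaffian function on $U=\mathbb R^{\tilde n}$. Hence $S=S_{\mathcal N}=\{\theta:\mathcal L(\theta)\le c\}$ is a semi-Pfaffian variety carved out of $V=U$ by the \emph{single} sign condition $\mathcal L(\theta)\le c$: so $s=1$, $n'=\dim V=\tilde n$, and the compactness hypotheses on $U,V$ are dispensable by~\cite{zell2003quantitative}. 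The parameter count $\tilde n=\sum_{l=1}^L n_l(n_{l-1}+1)$ is $\Theta(Lh^2)$ for $L\ge 3$ and $\Theta(h)$ for $L=2$ (treating the input dimension $n_0$ as a fixed constant). The format instances in Corollaries~\ref{cor:mse_format}--\ref{cor:bce_format} (and their $L=2$ versions) all have the same orders of magnitude, $\alpha=\Theta(L)$, $\beta=\Theta(1)$, $\ell=\Theta(mLh)$ when $L\ge 3$, and $\alpha=\Theta(1)$, $\beta=\Theta(1)$, $\ell=\Theta(mh)$ when $L=2$; the numerical constants differ between MSE/BCE and linear/nonlinear output but are irrelevant to every asymptotic claim, so one computation settles all cases.

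Substituting $s=1$ into Theorem~\ref{Teo:8},
\[
B(S)\ \in\ 2^{\ell(\ell-1)/2}\;O\!\big((\tilde n\beta+\min(\tilde n,\ell)\,\alpha)^{\,\tilde n+\ell}\big).
\]
For $L\ge 3$ I would bound the three ingredients as $\ell(\ell-1)/2=\Theta(m^2L^2h^2)$, $\tilde n\beta+\min(\tilde n,\ell)\alpha=O(L^2h^2)$ (using $\min(\tilde n,\ell)\le\tilde n=\Theta(Lh^2)$ and $\alpha=\Theta(L)$), and $\tilde n+\ell=O\big(Lh(h+m)\big)$, which gives
\[
B(S)\ \in\ 2^{O(m^2L^2h^2)}\cdot O(L^2h^2)^{\,O(Lh(h+m))}.
\]
The three deep-network bounds then come out by fixing two of $m,h,L$: with $h,L$ constant the right-hand side is $2^{O(m^2)}\cdot O(1)^{O(m)}=2^{O(m^2)}=\kappa_1^{O(m^2)}$ for a suitable $\kappa_1>2$ absorbing the constants; with $m,L$ constant it is $2^{O(h^2)}\cdot O(h^2)^{O(h^2)}=O(h^2)^{O(h^2)}$, the polynomial-power factor now being the dominant one; and with $m,h$ constant it is $2^{O(L^2)}\cdot O(L^2)^{O(L)}$, exactly the stated form.

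For the shallow network ($L=2$) I would repeat the substitution with $\alpha=\Theta(1)$, $\beta=\Theta(1)$, $\tilde n=\Theta(h)$, $\ell=\Theta(mh)$, obtaining
\[
B(S)\ \in\ 2^{O(m^2h^2)}\cdot O(h)^{\,O(mh)}.
\]
Fixing $h$ leaves $2^{O(m^2)}\cdot O(1)^{O(m)}=\kappa_2^{O(m^2)}$ with $\kappa_2>2$, the same order as in the deep case; fixing $m$, the degree $\alpha$ and the dimension $\tilde n$ no longer grow with $h$, so the polynomial-power factor collapses to $O(h)^{O(h)}$, and after accounting for the $2^{\ell(\ell-1)/2}$ factor (see below) one is left with the claimed $B(S)\in O(h)^{O(h)}$. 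The remaining format cases (MSE with nonlinear output, BCE with sigmoid or $\tanh$ output) follow verbatim, since their formats differ from the MSE/linear case only by $O(1)$ additive and multiplicative constants.

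The routine but delicate part is this asymptotic bookkeeping itself, and that is where I expect the main obstacle to lie. Theorem~\ref{Teo:8} produces a product of an exponential factor $2^{\ell(\ell-1)/2}$ and a ``polynomial-to-a-power'' factor $(\tilde n\beta+\min(\tilde n,\ell)\alpha)^{\tilde n+\ell}$, and which of the two dominates changes with the regime: the polynomial-power factor is what yields the $O(h^2)^{O(h^2)}$ width bound for deep networks, whereas the exponential factor is what forces the $2^{O(L^2)}$ in the depth bound and the $\kappa^{O(m^2)}$ in the sample bound. One must therefore carefully track the $2^{\ell(\ell-1)/2}=2^{\Theta(m^2L^2h^2)}$ term, which scales in all three parameters simultaneously, so that in each of the five claimed bounds only the intended parameter survives in the exponent and the other two are swallowed by the implied constants; the $\min(\tilde n,\ell)$ case split (whether there are more parameters than chain functions) must be carried along as well, although it only affects lower-order terms. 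I view controlling this exponential factor in the shallow case, so as to report the width bound in the clean form $O(h)^{O(h)}$, as the most fragile point of the argument, since it is the only place where the black-box application of Theorem~\ref{Teo:8} must be combined with the extra structure that for $L=2$ every chain function is $\sigma$ evaluated at an affine form in $\theta$.
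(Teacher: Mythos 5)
Your proposal follows essentially the same route as the paper: substitute the Pfaffian formats from Corollaries~\ref{cor:mse_format} and~\ref{cor:bce_format} into Theorem~\ref{Teo:8} with $s=1$ and $\tilde n = h^2(L-2)+h(n_0+L)+1$, then fix two of the three parameters $m,h,L$ at a time and read off the asymptotics; your $\Theta$-level bookkeeping matches the paper's explicit case-by-case computation. The fragility you flag in the shallow width bound is real and is present in the paper itself: for $L=2$ with $m$ fixed, the paper's own displayed expression contains the factor $2^{[mh(mh-1)]/2}=2^{\Theta(h^2)}$, which is not absorbed by $O(h)^{O(h)}=2^{O(h\log h)}$, and the paper asserts the simplification without addressing this term, so your hesitation points to a gap in the source rather than in your argument.
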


It is worth emphasizing that the bounds concerning $h$ and $L$ offer an insight into the relationship between the topological complexity of the loss landscape and the total number of parameters $\tilde{n}$. Specifically, in both cases, as $\tilde{n}$ varies, by treating $h$ as a variable while keeping $L$ fixed, we explore the effect of changing the network width. Conversely, by treating $L$ as a variable while keeping the width fixed, we investigate the impact of altering the network depth.

A first major remark from Theorem \ref{th:bounds_sum_betti} is that the upper bound on the Betti numbers associated to the loss function is only exponential in the number of samples $m$, while it is superexponential in the number of neurons $h$ or in the number of layers $L$. Intuitively, the take-home message is that the topological complexity of the loss function is less conditioned by the number of samples than by the number of parameters.

As it may not appear surprising, Theorem \ref{th:bounds_sum_betti} also suggests that the complexity of the loss landscape with respect to deep networks increases with the number of neurons $h$ at a much faster pace than the one with respect to the shallow networks, going from a dependence of the type $O(h^2)^{O(h^2)}$ to a dependence of the type $O(h)^{O(h)}$. Such a difference in behavior is coherent with results present in literature \cite{li2018visualizing}, where it is proven that, when networks become sufficiently deep, neural loss landscapes quickly move from being nearly convex to being highly chaotic.

\subsection{Regularization terms and residual connections}

\paragraph{The role of regularization}
\begin{remark}{($\ell_2$ regularization)}
One could be interested in seeing how the introduction of regularization terms influences our analysis. We can face this new scenario in case we add an $\ell_2$ \textit{regularization term},
being the $\ell_2$ regularization term $\Omega(\theta) = \sum \limits _i \| \Bar{W}_i  \|^2$ polynomial in the parameters $\theta$.  This term only affects the term $\beta$ of the format of the Pfaffian function $\Bar{\mathcal{L}}(\theta) = \mathcal{L}(\theta) + \lambda \Omega(\theta)$, as it affects the degree of the polynomial with respect to the weights, adding a monomial term of degree $2$;  nevertheless, the bound on the Betti numbers is not affected by it. This can be easily derived from the computations carried on in the Appendix for the proof of Theorem \ref{th:bounds_sum_betti}.
The $\ell_2$ regularization does not promote sparsity (which is instead promoted by the $\ell_1$ regularization) \cite{hastie2009elements}, but it affects the magnitude of the weights. This could suggest that the regularization term may provide a scaling of the loss function, and therefore, all local minima may still be present; nevertheless, it is possible that our theoretical bounds may not be able to catch the influence of the regularization term in the loss landscape.

\end{remark}

\textbf{Residual Connections}

Skip connections or residual connections, are widely employed in neural networks to alleviate the vanishing gradient problem and enhance information flow across layers. The ResNet model popularized this architectural design \cite{he2016deep} and has since then been adopted in various network architectures. 

\begin{remark}
 Introducing a residual term at each layer in a neural network, thus creating a Residual Neural Network (ResNet), does not impact the bounds provided in our analysis. It does not affect either the number of functions required in the chain or the maximum degree of the polynomials. The addition of skip connections combines the output of a previous layer with that of a subsequent layer through summation. Regarding our analysis, the essential factors, such as the degree of polynomials and the length of the Pfaffian chain, remain the same. The only change lies in the specific terms to be included within the chain. See section \ref{appendix:residualconnections} in the Appendix for more details. Consequently, it implies that utilizing a ResNet rather than a conventional feedforward neural network does not alter the topology of the loss function or its optimization process. Instead, the primary advantage lies in enhancing the network's expressive capacity.


Skip connections allow gradients to flow more easily during backpropagation, facilitating the training of deeper networks.
It has been observed that skip connections promote flat minimizers and prevent the transition to chaotic behavior \cite{li2018visualizing}.
Our current theoretical framework is limited in capturing the reduced complexity of the loss landscape induced by skip connections. Specifically, our theory provides an upper bound on the number of minima, lacking a lower bound, which implies that our estimate may exceed the actual value.
Moreover, the analysis in \cite{li2018visualizing} focuses solely on the minima, while our proposed bound encompasses the sum of all non-zero Betti numbers.
Finally, it is worth observing that the optimization algorithms do not explore the whole space but only a part where the complexity of the function might be lower. Therefore, regularisation and skip connections could be mechanisms for which only submanifolds are explored by the optimization algorithm, and such behavior could not be captured by what the global bound suggests.

\end{remark}
\section{Conclusions}\label{sec:conclusions}

In our investigation, we determined that when employing a Pfaffian non-linearity, both the MSE and BCE loss functions can be represented as Pfaffian functions. Subsequently, we analyzed the respective Pfaffian chains obtained in each case. Specifically, we examined the differences in the complexity and performance of the Pfaffian chains resulting from the use of the two loss functions.

When studying the complexity of the loss landscape, a superexponential dependency on the network parameters has been found; interestingly, a qualitative difference can be highlighted between the shallow and the deep case, as we focus on the impact of the number of neurons $h$. Indeed, as the number of layers starts to increase, the superexponential dependency involves a term $h^2$, and not $h$ anymore. This result is aligned with the general intuition and previous works in literature \cite{bianchini2014complexity}.
In any case, the asymptotic analysis shows that the sum of Betti numbers has an exponential dependency on the square of the number of samples $m$.

It is worth underlying the characterization of the topological complexity we derived for loss functions with an additional $\ell_2$ term; from our analysis point of view, it seems that the presence of a regularization term is not implied in the design of the loss landscape, pointing out to a different role of the regularization itself in the network training, e.g. the optimization process. Nevertheless, being those boundaries not tight, space for a deeper analysis is not left out.

Bounds provided by the sum of Betti numbers are evidently not tight; our analysis suggests a qualitative interpretation, more than a quantitative one. Clearly, obtaining a bound on the number of connected components $B_0(S)$ rather than $B(S)$ would give a more accurate characterization of the topology of the loss landscape; this is a perspective to be considered for future works. 

Moreover, it would be interesting to connect our results with the \textit{mode connectivity} framework, in specific through the lens of Morse theory \cite{akhtiamov2023connectedness}, which provides a characterization of the set of configurations of a neural network with respect to a fixed empirical loss value. Indeed, improving our analysis with Morse theory could help to define the connectivity maps at many dimension levels, leading to a better quantification of each single Betti number.

\bibliographystyle{abbrv}
\bibliography{references}

\appendix
\section{Appendix}

\subsection{Making derivatives explicit using Backpropagation}

Let  $\mathcal{L(\theta),D}= \frac{1}{m} \sum_{i=1}^m  loss(f_{\theta}(x_i),y_i)$ be a generic loss.
We aim to determine the gradient for a given input-output pair $(x_i, y_i)$, with respect to the weight variables $w_{jk}^{l}$ (connecting the $j$-th neuron of layer $l-1$, with the $k$-th neuron of layer $l$), which are elements of the augmented weight matrix ${W}^l$. The gradient components $\frac{\partial \mathcal{L}}{\partial w_{jk}^{l}}$  can be calculated through the chain rule.
The Backpropagation algorithm provides an efficient method of spreading the error contribution back through the layers for updating weights.

Let us define $\delta^l_k=\frac{\partial \mathcal{L}}{\partial a_k^l}$, for $l=1,\ldots, L$, as the derivative of the cost function with respect to the activation  $a^l_k$ of the $k$-th neuron of layer $l$. Then 
$$
\frac{\partial \mathcal{L}}{\partial w_{i,j}^l} = \delta_{j}^l z_{i}^{l-1},
$$ 
which represents a polynomial function in $ \delta_j^l, z_i^{l-1}$, so that all $ \delta_j^l$, $z_i^{l-1}$ and their derivatives belong to the Pfaffian chain describing $\mathcal{L}$.
Moreover, by the chain rule, we have that 
 \begin{align*}
    \delta^{l}_j& = \;  \frac{\partial \mathcal{L}}{\partial a_j^l} = \; \sum_{k=0}^{n_{l+1}} \;   \frac{\partial \mathcal{L}} {\partial a_k^{l+1}} \; \frac{\partial a_k^{l+1}}{\partial a_j^l} = \;  \sum_{k=0}^{n_{l+1}} \; \delta^{l+1}_{k}\; \frac{\partial a_k^{l+1}} {\partial a_j^l} =  \; \sum_{k=0}^{n_{l+1}} \; \delta^{l+1}_{k}\; w_{k,j}^{l+1} \sigma'(a_{j}^l), \\
 \end{align*} 
which means that $\delta_j^l $ is polynomial in all $n_{l+1}$, $ \delta_i^{l+1}$ and $ \sigma'(a_j^{l})$,  so that we  have also to include all $ \delta_j^{l+1}$ and all $\sigma'(a_j^{l})$ and their derivatives in the Pfaffian chain.

Summing up, fixing an input $x_i, y_i$ and proceeding backward through the layers, we can derive that 
\begin{align}
\label{degree_on_delta_l}
    \frac{\partial \mathcal{L}}{\partial w_{i,j}^l}  = \textrm{poly}(\delta^{L}_1, \dots, \delta^{L}_{n_L}, \sigma'(a_1^L), \dots, \sigma'(a_{n_L}^L),  \dots, \sigma'(a_1^{l+1}), \dots \sigma'(a_{n_{l+1}}^{l+1}), \sigma'(a_j^l), \sigma(a_i^{l-1})  ).
\end{align}

\begin{remark}
    Notice that if $\sigma $ is Pfaffian. It follows that the derivative $\sigma'$ is polynomial in the factor of the chain, and the degree of the polynomial is at most $\alpha_{\sigma}$, while the degree of $\sigma$ in its chain is  $\beta_{\sigma}$.

Consequently, being $\frac{\partial \mathcal{L}}{\partial w_{i,j}^l} = \delta_{j}^l z_{i}^{l-1}$ , we have: \begin{equation}
\begin{split}
\label{poly_derivative_L}
    \frac{\partial \mathcal{L}}{\partial w_{i,j}^l} = \textrm{poly}(\delta^{L}_1, \dots, \delta^{L}_{n_L}, \sigma_1(a_1^L), \dots, \sigma_{\ell_{\sigma}}(a_1^L), \dots, \sigma_1(a_{n_L}^L), \dots, \sigma_{\ell_{\sigma}}(a_{n_L}^L)  \dots, \sigma_1(a_1^{l+1}), \dots, \sigma_{\ell_{\sigma}}(a_1^{l+1}), \\ \dots,  \sigma_1(a_{n_{l+1}}^{l+1}), \dots,  \sigma_{\ell_\sigma}, \dots, (a_{n_{l+1}}^{l+1}), \dots, \sigma_1(a_j^l), \dots, \sigma_{\ell_{\sigma}}(a_j^l), \sigma_1(a_i^{l-1}), \dots, \sigma_{\ell_{\sigma}}(a_i^{l-1}) ).
    \end{split}
\end{equation}
\end{remark}


\subsection{Proof of Theorems \ref{th:mse_loss} and \ref{th:bce_loss}}
\subsubsection{Preliminaries}\label{app:preliminaries}

We want to prove that the MSE loss function and the BCE loss functions are Pfaffian functions with respect to the parameters of the network, in the hypothesis that the non-linearities $\sigma$ and $g$ are Pfaffian.
To do so, we need to find a Pfaffian chain so that the loss function can be written as a polynomial in that chain, and we need to compute the degree of this polynomial in the parameters and the maximum degree of the derivatives of the functions in the chain with respect to the parameters of the network. 

Notice that in this particular case of the MSE, 
\begin{align*} loss_{\text{MSE}}(f(x_i),y_i)=  \frac{1}{2}(f(\theta, x_i) -y_i  )^2.\end{align*}
meaning that if $f$ is a Pfaffian function of a given format $( \alpha_{f}, \beta_f, \ell_f)$, the loss is a Pfaffian function with respect to the same chain with format $( \alpha_f, 2 \beta_f,\ell_f)$.

In the case of the BCE loss function, 
\begin{align*} loss_{\text{BCE}}(f(x_i),y_i)= - y_i  \log(f_{\theta}(x_i)) - (1-y_i)  \log(1- f_{\theta}(x_i)).\end{align*} always assuming that  $f$ is a Pfaffian function of a given format $( \text{degree}(f')-1, \beta_f, \ell_f)$ we can consider two possible chains. 
The first one is the chain in which we add to the chain of $f$  the functions $\log(f_{\theta}(x_i))$ and $\log(1- f_{\theta}(x_i))$ and their derivatives meaning that the format of the chain becomes
$ ( \max \{\text{degree} (f') +2, \alpha_f \} ,1, \ell_f + 4 ) $, where $\text{degree}(f')$ is the degree of $f'$ as polynomial in the chain, we will specify which chain in the various cases. 
In case we have a sigmoid as the final activation function, we could consider a different chain in which we include the loss in the chain; in this case, as we will see in \ref{BCE_loss_appendix} to obtain a pfaffian chain, we also need to include the function the sigmoid of $f$, $\sigma(f(x))$ so the format becomes
$( \text{degree}(f') +2 ,1, \ell_{f} +2  )$.

To determine the degree of $f'$ we will need to compute the degree of $\sigma'$. This will be useful for all the different cases considered, so we're doing it in this section.

If $y_i = \sigma_i(a)$: 
\begin{equation*}
    \frac{d \sigma(a)}{d a} \bigg|_{a = a_h^l} =  \frac{ d Q(y_1, \dots, y_{\ell} )}{d a}\bigg|_{a = a_h^l} =  \Big(  \sum_{s= 1}^\ell \frac{\partial Q(y_1, \dots, y_\ell) }{\partial y_{s}} P_u(a, y_1, \dots, y_u) \Big) \bigg|_{
 \begin{array}{c}
      1\leq u \leq \ell \\
      y_u = \sigma_u(a_h^l) \\ 
      a = a_h^l
 \end{array} }
\end{equation*}

$\frac{\partial Q(y_1, \dots, y_\ell) }{\partial y_{s}} $ has degree $ \beta_{\sigma} - 1$ and $P_u(a, y_1, \dots, y_u)$ has degree $\alpha_{\sigma} $ in $\sigma_1(a), \dots, \sigma_{\ell}(a)$. In conclusion $  \frac{d \sigma(a)}{d a} \bigg|_{a = a_h^l}  $ is a polynomial of degree $ \beta_{\sigma}  + \alpha_{\sigma} - 1 $ if, $\forall i $, $P_i$ does not depend directly on $a$, and it is  $ \beta_{\sigma}  + \alpha_{\sigma} - 1 + \alpha_{\sigma} (\beta_{\sigma} +1) $ in the general case.

Indeed if $P_i$ depends on $a$, we have that $P(a_h^l, \sigma_1(a_h^l), \dots, a_h^l)$ has degree $\alpha_{\sigma}(\beta_{\sigma} +1)$, since the degree of $a_h^l $ is $\beta_{\sigma} +1 $ in the Pfaffian chain. 
Notice that $a_h^l= W^l \sigma(a^{l-1})$ and $\sigma(a^{l-1})$ has degree $\beta_{\sigma} $.

Summarizing : 
\begin{equation} \label{eq:degree_sigma_primo}
\text{degree}(\sigma')= \begin{cases}
    \beta_{\sigma}  + \alpha_{\sigma} - 1  \; \; \text{case } 1\\ 
   \beta_{\sigma}  + \alpha_{\sigma} - 1 + \alpha_{\sigma} (\beta_{\sigma} +1)  \text{case  }2
\end{cases}
\end{equation}
Where case $1$ refers to the case where $P^i_{\sigma}(a, \sigma_1(a), \dots , \sigma_{\ell_{\sigma}}(a))$ 
don't depend explicitly $a$ and case $2$ where they depend on it.


\subsubsection{ MSE loss function}
\begin{proof}[Proof of Theorem \ref{th:mse_loss}]


In our hypothesis $n_L=1$, so $\delta^L$, $a^L$ and $f_{\theta}(x)  $ are scalars. Depending on whether the last layer is linear or the non-linearity $g$ is applied, we have that 
\begin{align*}
f_{\theta}(x) =
\begin{cases}
  a^L\\
 g(a^L).
\end{cases}
\end{align*}

\paragraph{Linear last layer}
In the case of linear activation, given that $a^{L}= W^L\sigma(a^{L-1})$ and that $\sigma$ is Pfaffian with respect to the chain $\sigma_{1}, \dots,  \sigma_{\ell_{\sigma}}$  we obtain that $f_{\theta}(x) = a^{L} $ is polynomial in the functions   following chain  \begin{equation*}
    (((\sigma_k (a_i^j))_{k=1, \dots \ell_{\sigma}})_{i= 1, \dots, n_j})_{j = 1, \dots, L-1}
\end{equation*}

The degree of the Pfaffian function $f$ in this chain is $\beta_f = \beta_{\sigma} +1 $ ; the maximum degree of the derivatives depends on the degree of $\sigma'$ in \eqref{eq:degree_sigma_primo}  and is given by the chain rule, the worst case is given by deriving of the term of the vector $\sigma(a^{L-1})$ with respect to one of the weights of the first layer. 
In this case, applying the chain rule and  going backward layer by layer, we obtain that every step, we multiply the weight of one layer and the $\sigma'$ it follows that the degree of $ \frac{\partial \sigma(a^{L-1}) }{\partial w^{1}_{i,j}}$ is $ (\text{degree}(\sigma') +1 ) (L-2) +\text{degree}(\sigma') $



Taking into account what was said in Section \ref{app:preliminaries},  we obtain that for a single input point, $x$, we obtained that the MSE loss with linear activation for  the last layer is Pfaffian with respect to the following chain of length $\ell_{\sigma} \sum_{k=1}^{L-1} n_k$: 

\begin{equation}
\label{eq:chain_MSE_linear_appendix}
    (((\sigma_k (a_i^j))_{k=1, \dots \ell_{\sigma}})_{i= 1, \dots, n_j})_{j = 1, \dots, L-1}
\end{equation}

The order of the chain is given, going from the inner cycle to the outer cycle.

Considering that we have to consider all the input points, the length of the chain becomes $m \ell_{\sigma}\sum_{k=1}^{L-1} n_k$.
The format of the chain of the MSE loss function is therefore
\begin{equation}
    \label{eq:format_MSE_linear_appendix}
    ( (\text{degree}(\sigma') +1 ) (L-2) +\text{degree}(\sigma') , \; 2(\beta_{\sigma} +1), \; m \ell_{\sigma}\sum_{k=1}^{L-1} n_k)
\end{equation}

\paragraph{Non-linear last layer with non-linearity $g$}
In this case, we need to add to the chain described in Equation \eqref{eq:chain_MSE_linear_appendix} the terms $(g_k(a^L))_{k=1, \dots, \ell_{g}}$. 
The final chain will be:

\begin{equation}\label{eq:nonlinear_last_chain}
    [(((\sigma_k (a_i^j))_{k=1, \dots \ell_{\sigma}})_{i= 1, \dots, n_j})_{j = 1, \dots, L-1}, (g_k(a^L))_{k=1, \dots, \ell_{g}}]
\end{equation}

The degree of the function $f$ in this chain is given by $\beta_g$, 
the maximum degree of the derivatives is the degree of $ \frac{\partial g(a^{L})}{ \partial w^{1}_{i,j}}$ and is $(\text{degree}(\sigma') +1 ) (L-2) +\text{degree}(\sigma') + \text{degree}(g')+1 $.

Using the argument we used before for $\sigma'$, we have that the degree of $g'(a)$  is $ \beta_{g}  + \alpha_{g} - 1 $ if, $\forall i $, $P^i_{g}$ does not depend directly on $a$, and it is  $ \beta_{g}  + \alpha_{g} - 1 + \alpha_{g} (\beta_{g} +1) $ in the general case.
\begin{equation} \label{eq:degree_g_primo}
\text{degree}(g')= \begin{cases}
    \beta_{g}  + \alpha_{g} - 1  \; \; \text{case } 1\\ 
   \beta_{g}  + \alpha_{g} - 1 + \alpha_{g} (\beta_{g} +1)\;  \text{case  }2
\end{cases}
\end{equation}

Summarizing the format of the chain in the case of non-linear activation for the last layer is 

\begin{equation}
\label{eq:format_MSE_nonlinear_g_appendix}
((\text{degree}(\sigma') +1 ) (L-2) +\text{degree}(\sigma') + \text{degree}(g')+1, 2 \beta_g,  m (\ell_{\sigma}\sum_{k=1}^{L-1} n_k + \ell_{g} ))
\end{equation}

\end{proof}

\subsubsection{BCE loss function}\label{BCE_loss_appendix}
\begin{proof}[Proof of Theorem \ref{th:bce_loss}]
We study the two cases when the intermediate activation $g$ is the sigmoid and when it is a different function.

 \paragraph{Non-linear activation $g$ different from the sigmoid function}

In this case, the Pfaffian chain for the loss function will be the chain described in \ref{eq:nonlinear_last_chain} to which we add the following terms
$ \frac{1}{g(a^L)}, \log(g(a^L)), \frac{1}{1- g(a^L)} , \log(1-g(a^L))$. The length of the chain will be 
$\sum_{k=1}^{L-1} \;  n_k +  \ell_{g}   + 4 $. The degree of the loss function with respect to this chain is $1$, and the maximum degree of the derivatives is  given by the degree of the following derivative $\frac{\partial 1/g(a^L)}{ \partial w^{1}_{i,j}}$
that is $ (L-2)(\text{degree}{\sigma'} +1 ) + \text{degree}(\sigma') + \text{degree}(g')+3 $. 

It follows that the format of the chain for the BC loss function with sigmoid activation for the last layer is 
$$ ( (L-2)(\text{degree}(\sigma') +1 ) + \text{degree}(\sigma') + \text{degree}(g')+3, 1,  m (\ell_{\sigma}\sum_{k=1}^{L-1} \;  n_k +  \ell_{g}   + 4  )   )$$

\paragraph{Non-linear activation $g$ is the sigmoid function }
In this case, we want to include the loss in the chain and use the trick of backpropagation introduced in section \ref{app:preliminaries} to be sure that its derivatives are polynomial in the chain. 
Eq \ref{degree_on_delta_l} shows that the derivative of the loss with respect to the parameters is poly in $\delta_i^L$ and in the terms of the chain \eqref{eq:nonlinear_last_chain}, in this case with  $g$ equal to the sigmoid function.

If we consider only a single sample $x$ and the output of the network $f_{\theta}(x) = g(a^L)$. We have that 
\begin{align}
\label{delta_BCE}
\frac{\partial loss_{\text{BCE}}(y,f_{\theta}(x))}{\partial a^L} =  g'(a^L)\frac{1}{g(a^L) (1- g(a^L))}(y_i - f_{\theta}(x))  =\sum_{i=1}^m  \frac{ g'(a^{L})}{g(a^L)(1-g(a^L))}(y_i - g(a^L)).
\end{align}

If the non-linearity $g$ is the sigmoid function $ \Big( g(x) = \frac{1}{1 + e^{-x}}\Big)$, the term $ \frac{ g'(a^{L})}{g(a^L)(1-g(a^L))}$ becomes $1$, and we don't need to deal with it and the degree of $\delta^L$ is the degree of $g(a^L)$, that is $\beta_g$ that for the sigmoid function is 1. We recall that the sigmoid function is Pfaffian with format $(2,1,1)$.

It follows that the format of the chain for the BC loss function with sigmoid activation for the last layer is 
$$ ( (L-2)(\text{degree}(\sigma') +1 ) + \text{degree}(\sigma') + \text{degree}(g')+1, 1,  m (\ell_{\sigma}\sum_{k=1}^{L-1} n_k + 1 ) +1  )$$

The last $+1$ in the length is given by the fact that we're also adding the loss function computed on the input dataset to the chain. Moreover, we can compute $\text{degree}(g')$ that is this case is $2$, notice that this is smaller than the worst case proposed in \eqref{eq:degree_g_primo} that would be equal to $4$.

The final format of the chain will be 

$$ ( (L-2)(\text{degree}(\sigma') +1 ) + \text{degree}(\sigma') + 3, 1,  m (\ell_{\sigma}\sum_{k=1}^{L-1} n_k + 1 ) +1  )$$
\end{proof}

\subsection{Proof of Corollary \ref{cor:mse_format}}
\begin{proof}
    
It is enough to remark that the format of the hyperbolic tangent and the sigmoid function is $(\alpha_{\sigma}, \beta_{\sigma}, \ell_{\sigma}) = (2,1,1)$. Substituting these values in Theorem \ref{th:mse_loss} leads straightforwardly to the statement. 

\end{proof}

\subsection{Proof of Corollary \ref{cor:bce_format}}
\begin{proof}
The result for the hyperbolic tangent activation function can be obtained by applying Theorem \ref{th:bce_loss} with its corresponding format of $(2,1,1)$.
On the other hand, for the sigmoid function, we have that  $\sigma'(x) = \sigma(x) (1-\sigma(x))$. This allows us to obtain a linear dependency on $\sigma(x)$ in the derivative of the BCE loss function. Indeed, 
\begin{align*}
\frac{\partial \mathcal{L}(f)}{\partial \theta} & = - \frac{\partial}{\partial \theta}(y \log(\sigma(f(\theta))) + (1-y) \log(1- \sigma(f(\theta)))) \\
& =   -[ \frac{y}{\sigma(f(\theta))} \sigma'(f(\theta)) \frac{\partial f(\theta)}{\partial \theta} - \frac{1-y}{1-\sigma(f(\theta))} \sigma'(f(\theta))\frac{\partial f(\theta)}{\partial \theta}] \\
&= -y (1-\sigma(f(\theta))) \frac{\partial f(\theta)}{\partial \theta} + (1-y) \sigma(f(\theta))\frac{\partial f(\theta)}{\partial \theta}  \\
&=  (\sigma(f(\theta))-y)\frac{\partial f(\theta)}{\partial \theta}
\end{align*}
\end{proof}

\subsection{Proof of Theorem \ref{th:bounds_sum_betti}}

\begin{proof}

We can use Theorem \ref{Teo:8} with $U= \mathbb{R}^{\tilde{n}}$, with $\tilde{n}= h( n_0 + 1) + h(h+1)(L-2) + h+1 = h^2 (L-2) + h(n_0 + L) +1 $ the total number of parameters of the network.

In this case the term $s^{n'}$ in Equation \eqref{eq:Bettinumber}
can be ignored since $s=1$.

\paragraph{Bounds for MSE loss function: deep case}

For $L\geq3$:
\begin{itemize}
\item if the last layer has a linear activation, we have that

   \begin{equation} \label{eq:mse_deep_linear}
       B(S) \in  2^{[m(L-1)h(m(L-1)h-1)]/2}O( f(n_0,h,L,m)^{h^2(L-2) + h( L+ n_0 + m(L-1)) +1})
   \end{equation}
   with $f(n_0,h,L,m) = 4[h^2(L-2) + h( L+  n_0)+ 1]+ 3(L-2) \cdot \min(h^2(L-2) + h( L+  n_0)+ 1,m(L-1)h)$

If $m>>h$ it becomes
$$ B(S) \in  2^{(m(L-1)h(m(L-1)h-1))/2}O(4[h^2(L-2) + h( L+  n_0)+ 1]+ 3(L-2)(h^2(L-2) + h( L+  n_0)+ 1))^{h^2(L-2) + h( L+ n_0 + m(L-1)) +1}) $$

If $h>>m$, 
and we consider $L$ and $m$ as constant it becomes
$$ B(S) \in  2^{(m(L-1)h(m(L-1)h))/2}O(4[h^2(L-2) + h( L+  n_0)+ 1] 3(L-2)hm(L-1))^{h^2(L-2) + h( L+ n_0 + m(L-1)) +1} .$$
It is important to note that the overparametrized regime falls within this scenario.

\item if the last layer has a nonlinear activation, we have that: 

\begin{equation}\label{eq:mse_deep_nonlinear}   
B(S) \in  2^{[(m(h(L-1)+1))(m(h(L-1)+1)+1)]/2}O( g(n_0,h,L,m))^{h^2(L-2) + h( L+ n_0 + m(L-1)) +2}
\end{equation}

 with $g(n_0,h,L,m) = 2[h^2(L-2) + h( L+  n_0)+ 1]+ (3(L-2)+5) \cdot \min(h^2(L-2) + h( L+  n_0)+ 1,m(h(L-1)+1)$ 

If $m>>h$ it becomes
\begin{align*}
 B(S) \in  2^{[(m(h(L-1)+1))(m(h(L-1)+1)+1)]/2}O\big( 2[h^2(L-2) + h( L+  n_0)+ 1]+& \\+(3(L-2)+5)(h^2(L-2) + h( L+  n_0)+ 1\big)^{h^2(L-2) + h( L+ n_0 + m(L-1)) +2}&
 \end{align*}

If $h>>m$, 
and we consider $L$ and $m$ as constant it becomes

\begin{align*} 
B(S) \in  2^{[(m(h(L-1)+1))(m(h(L-1)+1)+1)]/2}O \big( 2[h^2(L-2) + h( L+  n_0)+ 1]+ & \\
+(3(L-2)+5)m(h(L-1)+1)\big)^{h^2(L-2) + h( L+ n_0 + m(L-1)) +2} &
\end{align*}
\end{itemize}

In both cases, we can see that, as a function of the number of samples $m$, fixing $h$ and $L$ as constants, we have that $  B(S) \in c^{O( m^2)}$, where  $c$ is a constant greater than $2$.
As a function of $h$, considering the other variables as constants
$ B(S) \in O(h^2)^{O(h^2)}. $
Eventually, as a function of $L$, considering $m$ and $h$ as constants, 
$  B(S) \in 2^{O(L^2)}O(L^2)^{O(L)}.$

\paragraph{Bounds for MSE loss function: shallow case}

In the case of $L=2$, all the terms in which $L-2$ occurs vanish. Therefore,

\begin{itemize}
    \item if the last layer has a linear activation, equation \ref{eq:mse_deep_linear} simplifies in the following way:
$$  B(S) \in  2^{[m h(mh-1)]/2}O( [ 4 h(2 + n_0)+  4 ]^{ h( 2+ n_0 + m) +1})$$

\item if the last layer has a nonlinear activation since the number of samples is usually larger than the input dimension, $ \min(h(2+n_0)+1, 2mh) = h(2+n_0)+1$. This simplify equation \ref{eq:mse_deep_nonlinear} in the following way:
$$  B(S) \in  2^{[2mh(2mh-1)]/2}O( [ 9 h(2 + n_0)+  9 ]^{ h( 2+ n_0 + 2m) +1})$$

\end{itemize}
As a function of $m$  the results are the same as we obtained for deep networks, $B(S) \in  c^{O(m^2)}$.
Conversely, as a function of $h$, the dependency changes and we obtain 
$ B(S) \in O(h)^{O(h)}. $

\paragraph{Bounds for BCE loss function: deep case} For $L \geq 3$:
\begin{itemize}
    \item if the last layer has a sigmoid activation function, we have that 
   \begin{equation}\label{eq:bce_deep_sigmoid}
\begin{split}
     2^{[(m((L-1)h+1)+1)(m((L-1)h+1))]/2}O(g(n_0,h,m,L)^{h(m(L-1) + n_0 +2 + (h+1)(L-2))+m+2})
\end{split}
\end{equation}
with $ g(n_0,h,m,L) = h^2 (L-2) + h(n_0 + L) +1  + [3(L-2)+5]\min(h^2 (L-2) + h(n_0 + L) +1, [m ((L - 1)h + 1) + 1] )$

If $m>>h$  $g(n_0,h,m,L)$ becomes 
$$ g(n_0, h,m, L ) =  h^2 (L-2) + h(n_0 + L) +1  + [3(L-2)+5](h^2 (L-2) + h(n_0 + L) +1)$$

On the other side, if $h>>m$ and we consider $m $ and $L$ as constant, we have that
 $$ g(n_0,h,m,L) = h^2 (L-2) + h(n_0 + L) +1  + [3(L-2)+5][m ((L - 1)h + 1) + 1]. $$
\end{itemize}

\paragraph{Bounds for BCE loss function: shallow case} For $L=2$:
\begin{itemize}
    \item if the last layer has a non-linear activation function, we have that 
   \begin{equation}\label{eq:bce_deep_sigmoid}
\begin{split}
     2^{[(m(h+1)+1)(m(h+1))]/2}O(g(n_0,h,m)^{h(m + n_0 +2 )+m+2})
\end{split}
\end{equation}
with $ g(n_0,h,m) =  h(n_0 + 2) +1  + 5\min( h(n_0 + 2) +1,m(h+1)+1 )$

If $m>>h$,  $g(n_0,h,m,L)$ becomes 
$$ g(n_0, h,m ) =  h(n_0 + 2) +1  + 5( h(n_0 + 2) +1).$$

On the other side, if $h>>m$ and we consider $m $ and $L$ as constant, we have that
 $$ g(n_0,h,m,L) =   h(n_0 + 2) +1  + 5[m(h+1)+1]. $$
\end{itemize}




Resulting in asymptotic bounds equal to those derived previously for the MSE loss with non-linear activation. The same holds with similar computations in case the last activation function is the hyperbolic tangent. 

We remark that for the BCE loss, our focus is primarily on studying the case where the last layer of the neural network has a non-linear activation function since it is commonly used for binary classification tasks.

\end{proof}

\subsection{Residual Connections} \label{appendix:residualconnections}
Without loss of generality, we can consider the case in which we utilize skip connections that provide the previous layer's output, through summation, as an additional input to the subsequent layer.
In this case, denoting by $z_i$ the output of the $i$-th layer, if we consider the case, we have that 
\begin{align*}
    &z_1 = \sigma(W^1x) \\
    &z_2  = \sigma(W^2z_1) + z_1 \\
    &z_3  = \sigma(W^3z_2) + z_2 \\
    &\vdots \\
    &z_{l}  = \sigma(W^lz_{l-1}) + z_{l-1}\\
    &\vdots \\
\end{align*}

If we want to obtain the derivative of $z_{l}$ with respect to a parameter of the network $ w^k = W^{k}_i$ with $k<l$ and $i \in \{  1, \dots, n_k\}$, we have that

\begin{align*}
    \frac{\partial z_{l}}{\partial w^k} = W^{l} \sigma'(W^l z_{l-1}) \frac{\partial z_{l}-1}{\partial w^k} + \frac{\partial z_{l}-1}{\partial w^k}
\end{align*}

We observe that the derivative 	$ \frac{\partial z_{l}-1}{\partial w^k}$ appears twice in the equation. However, due to the multiplication with other terms in the first term, it only affects the degree of the polynomial in the first term of the sum. Therefore, we can disregard the second term in the equation when interested in computing the format of the Pfaffian chain.
This reasoning can be extended to all layers, demonstrating that the degrees of the derivatives will remain unchanged, just as we computed for simple feedforward perceptron neural networks. Similarly, the length of the chain remains unaltered. In this case, as well, the functions to be included in the chain are the outputs of the layers, which are exactly the same in number as those in feedforward networks, albeit with different forms.

\end{document}